\def\eqref#1{equation~\ref{#1}}
\def\1{\bm{1}}
\DeclareMathAlphabet{\mathsfit}{\encodingdefault}{\sfdefault}{m}{sl}
\SetMathAlphabet{\mathsfit}{bold}{\encodingdefault}{\sfdefault}{bx}{n}
\definecolor{asparagus}{rgb}{0.0, 0.0, 0.0}
\title{Beyond Spatio-Temporal Representations: Evolving Fourier Transform for Temporal Graphs}
\author{Anson Bastos$^{1,2}$, \textbf{Kuldeep Singh$^{4}$, Abhishek Nadgeri$^{3}$, Manish Singh$^{2}$, Toyotaro Suzumura$^{5}$}\\
$^{1}$HERE Technologies, India $^{2}$Indian Institute of Technology Hyderabad, India\\ 
$^{3}$RWTH Aachen, Germany $^{4}$Cerence Gmbh, Germany $^{5}$The University of Tokyo, Japan\\
\texttt{ansonbastos@gmail.com}, ~\texttt{kuldeep.singh1@cerence.com}, ~\texttt{abhishek.nadgeri@rwth-aachen.de}\\
~\texttt{msingh@cse.iith.ac.in}, ~\texttt{suzumura@acm.org}
}
\setlist[enumerate]{nosep}
\definecolor{asparagus}{rgb}{0.0, 0.0, 0.0}
\newtheorem{theorem}{Theorem}
\newtheorem{lemma}{Lemma}
\newtheorem{property}{Property}
\newtheorem{remark}{Remark}
\newcommand{\ftmat}{\mathbf{\Psi}}
\newcommand{\invftmat}{\mathbf{\Phi}}
\newcommand{\eft}{\emph{EFT }}
\newcommand{\ljdmat}{$L_{\mathcal{J_D}}$}
\newcommand{\aeft}{\emph{AD }}
\newcommand{\dlambdamin}{\left( \Delta \lambda \right)_{min}}
\newcommand{\dlambdagmin}{\left( \Delta \lambda_G \right)_{min}}
\newcommand{\dlambdajmin}{\left( \Delta \lambda_J \right)_{min}}
\begin{document}

\maketitle

\begin{abstract}

We present the Evolving Graph Fourier Transform (\emph{EFT}), the first invertible spectral transform that captures evolving representations on temporal graphs. 
We motivate our work by the inadequacy of existing methods for capturing the evolving graph spectra, which are also computationally expensive due to the temporal aspect along with the graph vertex domain.
We view the problem as an optimization over the Laplacian of the continuous time dynamic graph. Additionally, we propose pseudo-spectrum relaxations that decompose the transformation process, making it highly computationally efficient. 
The \eft method adeptly captures the evolving graph's structural and positional properties, making it effective for downstream tasks on evolving graphs. Hence, as a reference implementation, we develop a simple neural model induced with \eft for capturing evolving graph spectra. 
We empirically validate our theoretical findings
on a number of large-scale and standard temporal graph benchmarks and demonstrate that our model
achieves state-of-the-art performance.
\end{abstract}


\section{Introduction} \label{sec:introduction}

In numerous practical situations, graphs exhibit temporal characteristics, as seen in applications like social networks, citation graphs, and bank transactions, among others \citep{kazemi2020representation}. These temporal graphs can be divided into two types: 1) temporal graphs with constant graph structure \citep{grassi2017time,cao2020spectral}, and 2) temporal graphs with dynamic structures \citep{FMLPRec,bastos2023learnable,Xu2020Inductive}. Our focus in this work is the latter case. 

The evolving graphs have been comprehensively studied from the spatio-temporal graph-neural network (GNN) perspective, focusing on propagating local information \citep{evolvegcn,shi2021gaen,ledg,Xu2020Inductive}. Albeit the success of spectral GNNs for static graphs for capturing non-local dependencies in graph signals \citep{wang2022powerful}, they have not been applied to temporal graphs with evolving structure. To make spectral GNN work for temporal graphs effectively and efficiently, there is a necessity for an invertible transform that collectively captures evolving spectra along the graph vertex and time domain. To the best of our knowledge, there exists no such transform in the spectral domain for temporal graphs with evolving structures. 

In the present literature, Graph Fourier Transform (GFT), which is a generalization of Fourier Transform, exists for static graphs but can not capture spectra of evolving graph structure \citep{shuman2013emerging}. Hence, it cannot be applied to temporal graphs due to the additional temporal aspect. One naive extension would be to treat the time direction as a temporal edge, construct a directed graph with newly added nodes at each timestep, and find the Eigenvalue Decomposition (EVD) of the joint graph. However, this would lose the distinction between variation along temporal and vertex domains. Moreover, such an approach would incur an added computational cost by a multiplicative factor of $\mathcal{O}(T^3)$, which would be prohibitively high for the temporal setting with a large number of timesteps. Thus, in this paper, we attempt to find an approximation to the dynamic graph transform that would capture its evolving spectra and be efficient to compute. 

We aim to propose a novel transform for a temporal graph to its frequency domain. For this we consider the Laplacian of the dynamic graph and find the orthogonal basis of maximum variation to obtain the spectral transform \citep{hammond2011wavelets}. We view this as an optimization of the variational form of the Laplacian such that the optimal value is within the $\epsilon-$ pseudospectrum \citep{terry_pseudospectrum}. We then show that such optimization gives us a simple and efficient to compute solution while also being close to the exact solution of the variational form under certain conditions of Lipschitz continuous dynamic graphs. 
Effectively, we propose a method to simultaneously perform spectral transform along both the time and vertex dimensions of a dynamic graph. 
This solves the following challenges over the natural extension of EVD over dynamic graphs: 1) The proposed transformation is computationally efficient compared to the direct eigendecomposition of the joint Laplacian. 
2) Distinction between time and vertex domain frequency components with the proposed transform provides interpretability to the transformed spectral domain.
We term the proposed concept as "Evolving Graph Fourier Transform" (\eft).

In summary, we make the following key contributions:

\begin{itemize}[resume, before = \vspace*{-\dimexpr\topsep+\partopsep\relax}, leftmargin=*]
    \item We propose $\eft$ (grounded on theoretical foundations), that transforms a temporal graph to its frequency domain for capturing evolving spectra.
    \item We provide the theoretical bounds of the difference between $\eft$ and the exact solution to the variational form and analyze its properties. 
    \item As a reference implementation, we develop a simple neural model induced with the proposed transform
   to process and filter the signals on the dynamic graphs for downstream tasks. We perform extensive experimentation on large-scale and standard datasets for dynamic graphs to show that our method can effectively filter out the noise signals and enhance task performance against baselines.
\end{itemize}

\section{Related Work} \label{sec:related}
\textbf{Spectral Graph Transforms:}
Work by \citep{hammond2011wavelets} was among the first to propose a computationally efficient algorithm to compute the Fourier Transform for static graphs.
Loukas et al. \citep{loukas2016frequency} conceptualized Joint Fourier Transform (JFT) over graphs on which the signals change with time. 
\textcolor{asparagus}{JFT has been generalized in \citep{kartal2022joint} by proposing the Joint Fractional Fourier Transform (JFRT).}
However, JFT and JFRT does not consider graph structures evolving with time. 
\citep{cao2021spectral} apply JFT and propose a model for time series forecasting. 
\citep{DGFT_icsap} summarized graphs over time by using Tucker decomposition to the dynamic graph Laplacian in order to obtain an orthogonal matrix and further applies it to a cognitive control experiment. However, this method does not fully capture the varying graph information in a lossless sense. 
\textcolor{asparagus}{Researchers have also proposed spectral methods for spatio-temporal applications such as action recognition \citep{yan2018spatial,pan2020spatio}, traffic forecasting \citep{yu2017spatio} etc.}
Other works such as \citep{mahyari2014fourier,chen2022gc,sarkar2012nonparametric,kurokawa2017multidimensional,joint_tv_filter_9374709,svd_gft_10195957} also consider temporal graphs, but ignore the evolving structure. 
We position our work as the novel spectral graph transform for temporal graphs which is currently a gap in existing literature. 

\textbf{Temporal Graph Representation Learning:} 
Since static graph methods do not work well with dynamic graphs \citep{evolvegcn}, researchers have proposed a slew of methods \citep{evolvegcn,DBLP:journals/kbs/GoyalCC20,ledg}, for learning on dynamic graphs for problems such as link prediction and node classification. 
One elementary way to adapt methods developed for static graphs on dynamic graphs is to use RNN modules in conjunction with GNN modules to capture the evolving graph dynamics. 
Researchers \citep{DBLP:journals/corr/SeoDVB16,narayan2018learning,DBLP:journals/pr/ManessiRM20} have explored this idea extensively.
Some other recent approaches model several real world phenomena, however, these methods rely on an RNN for encoding temporal information such as \citet{bastas2019evolve2vec}, \citet{Xu2020Inductive}, \citet{Ma2020streaming}, etc. Most generic among these works is TGN (Temporal Graph Networks) \citep{rossi2020temporal} that remembers nodes and connections it has seen in the past, and then uses that memory to update new nodes and connections that it hasn't seen before. However, the memory updater uses GRU which may have issues such as vanishing gradient limiting the ability to capture long term information. Also, these models have been studied for small-graphs spread over limited time duration (e.g., one month). \\ 
Considering large scale temporal graphs with evolving structures, one such application is that of sequential recommendation (SR) with decades of temporal information (1996-2018) \citep{DGSR, huang2023temporal}. 
Researchers \citep{DGCF,DGSR,TKDD-dynamic} have attempted to model the sequential recommendation task as a link prediction over dynamic graphs. 
DGSR \citep{DGSR} is a work that considers generic dynamic graphs over user-item interactions. However, the GNN-based methods described in this section including DGSR majorly employ low pass GNNs that limit the ability to model complex relations and are fundamentally restricted to focus on local neighborhood interactions \citep{balcilar2020analyzing}. 
\begin{figure*}
    \centering
    \includegraphics[width=0.77\textwidth]{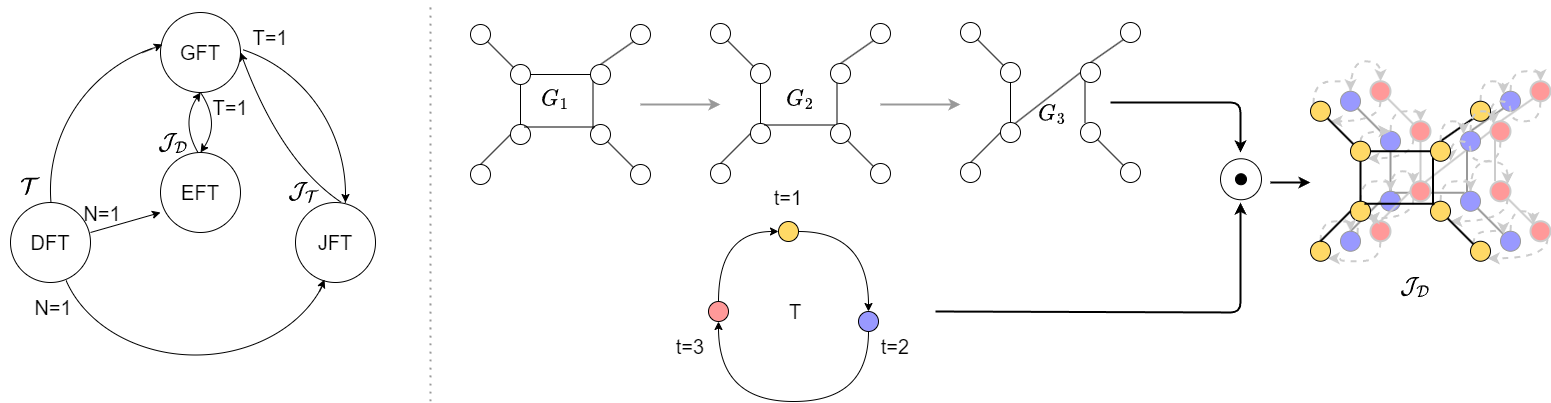}
    \caption{Left circular figure shows equivalence between \emph{EFT} and existing transformations (DFT \citep{sundararajan2023discrete}, JFT \citep{loukas2016frequency}, GFT \citep{ortega2018graph}). Each directed arrow (e.g, A to B), interprets as a transform simulation (transform A can be simulated by B using edge annotations). Right part shows timestamp-wise product between signals and graph structure. Here, nodes of next timestep are connected by dotted arrows to obtain the graph $\mathcal{J_D}$ which can be used by GFT to simulate \emph{EFT} (if graph is static).}

    \label{fig:ctdft_motivation}
\end{figure*}

\section{Preliminaries} \label{sec:preliminaries}
\textbf{Discrete Fourier Transform}\label{subsec:dft}
 (DFT) \citep{sundararajan2023discrete} is employed to obtain the frequency representation of a sequence of signal values sampled at equal intervals of time. 
Consider a signal $x$ sampled at $N$ intervals of time $t \in [0, N-1]$ to obtain the sequence $\{x_t\}$. The DFT of $x_t$ is then given by $X_k = \sum_{t=0}^{N-1} x_t e^{-i \omega_t k}$ with $\quad \omega_t=\frac{2 \pi t}{N}$.
The transformed sequence $X_k$ gives the values of the signal in the frequency domain. If we represent $X$ as the vector form of the signal, we can define the DFT matrix $\ftmat_T$ such that $X_k = \ftmat_T X$. 

\textbf{Graph Fourier Transform }\label{subsec:gft}
($\emph{GFT}$) \citep{ortega2018graph} is a generalization of the Discrete Fourier Transform (DFT) to graphs. We represent a graph as $(\mathcal{V},\mathcal{E})$ where $\mathcal{V}$ is the set of $N$ nodes and $\mathcal{E}$ represents the edges between them. Denote the adjacency matrix by $A$. 
$D$ is the degree matrix, defined as $(D)_{i}^{i}=\sum_{j}(A)_{ij}$, which is diagonal. The graph Laplacian graph is given by $\hat{L}=D-A$ and the normalized Laplacian $L$ is defined as $L=I-D^{-\frac{1}{2}}AD^{-\frac{1}{2}}$. The Laplacian($L$) has the eigendecomposition as: $L = \ftmat_G^{*} \Lambda \ftmat_G$.
Let $X \in R^{N \times d}$ be the signal on the nodes of the graph. The Graph Fourier Transform $\hat{X}$ of $X$ is then given as: $ \hat{X} = \ftmat_G X$. 

\textbf{Pseudospectrum}:
The spectrum of a graph (of N nodes) is a finite set consisting of N points $\lambda$ that form the eigenvalues of the graph's matrix representation $M$ i.e. $\{ \lambda \in \mathbb{C} \quad | \quad \norm{(M-\lambda I)^{-1}} = \infty \}$. Similarly we can think of the ($\epsilon$-)pseudospectrum of a graph to be the larger set (containing these N points) such that $A-\lambda I$ has the least singular value at most $\epsilon$. Formally the pseudospectrum can be defined by the set $\{ \lambda \in \mathbb{C} \quad | \quad \norm{(M-\lambda I)^{-1}} \geq \frac{1}{\epsilon} \}$.

{
\color{asparagus}
\textbf{Common Notations}:
We denote by $\oplus$, $\otimes$ the Kronecker sum, product respectively. $(M)_{i}^{j}$ refers to the $i$-th row and $j$-th column of matrix $M$. $\{.\}$ refers to a sequence, of elements, in time. $\boxtimes, \boxplus$ refer to the Kronecker product and sum respectively, applied timestep wise.
}

\section{Theoretical Framework: An Optimization Perspective} \label{sec:problem_approach}
We begin by striving for a physical interpretation of frequency for dynamic graph systems. For this, we draw inspiration from energy diffusion processes and establish similarities with the variation of signals on static graphs. 
Consider graph $G_t$ at time $t$ with node $n_{i} \in V_t$ and $n_j \overset{G_t}{\thicksim} n_i$ denoting the neighbors of $n_i$ at time $t$. We define a directed graph $\mathcal{J_D}$ with the graphs at all timesteps taken as is and a directed edge added from a node at time $t-1$ (modulo $T$) to its corresponding node at time $t$. For continuous time dynamic graph the previous time would be represented by $t-dt$ (modulo $T$). Let $X_{n_{i},t}$ represent the energy of the signal on node $n_i$ at time $t$. The flow of energy to the node $n_i$ at time $t$ can be represented by the divergence of the gradient ($\Delta_{n_{i},t} X$) of the energy. We define the variation of the signals at time $t$ and node $n_i$ as follows:  $\small{\left\lVert \Delta_{n_{i},t} X \right\rVert_2 = \left[ \sum_{n_j \overset{\mathcal{J_D}}{\thicksim} n_i} \left( \frac{\partial X}{\partial e_{n_i n_j}} \right)^2 \right]^{\frac{1}{2}} = \left[ \sum_{n_j \overset{G_t}{\thicksim} n_i} \left( X_{n_j,t} - X_{n_i,t} \right)^2 + \left( \frac{\partial X_{n_i,t}}{\partial t} dt \right)^2 \right]^{\frac{1}{2}}}$
, where $\frac{\partial X}{\partial e_{n_i n_j}}$ is the discrete edge derivative on the collective dynamic graph $\mathcal{J_D}$. Considering $\Delta$ to be the finite difference between neighboring nodes in the joint graph, the global notion of variation ($S_p(X)$) can be given by the $p$-Dirichlet form as follows
\begin{align*}
    \small
    S_p(X) = \frac{1}{p} \sum_{n=1}^{N} \int_{t=0}^{T}  \left\lVert \Delta_{n_{i},t} X \right\rVert_2^p dt = \frac{1}{p} \int_{t=0}^{T} \sum_{n=1}^{N} \left[ \sum_{n_j \overset{G_t}{\thicksim} n_i} \left( X_{n_j,t} - X_{n_i,t} \right)^2 + \left( \delta X_{n_i,t} \right)^2 \right]^{\frac{p}{2}} dt
\end{align*}

Define $L_T$ to be the Laplacian of the continuous ring graph representing the nodes at each timestep $t \in [0,T]$ and connecting consequent nodes. Let $L_{G_t}$ be the Laplacian of the sampled graph at time $t$. In the discrete case the Laplacian $L_{\mathcal{J_D}}$ of $\mathcal{J_D}$ can be shown to be 
\begin{equation}
    (L_{\mathcal{J_D}})_{i}^{j} = (L_{T} \otimes I_{N})_{i}^{j} + (I_{T} \otimes \{L_{G_t}\})_{i}^{j \left\lfloor \frac{j}{N} \right\rfloor} = (L_T \oplus {L_{G_t}})_{i}^{j \left\lfloor \frac{j}{N} \right\rfloor} \\
\end{equation}
For the case of continuous time, this can be generalized to 
\begin{equation}
    (L_{\mathcal{J_D}}) = L_{T} \otimes I_{N} + [I_{T} \boxtimes \{L_{G_t}\}] = [L_T \boxplus {L_{G_t}}] \\
\end{equation}
where $\boxtimes, \boxplus$ refers to the timestep wise Kronecker product and sum respectively and $[.]$ refers to the matricization operation. In the discrete case this operation would convert $R^{NT \times T \times N} \xrightarrow[]{} R^{NT \times NT}$, ordering from the last dimension first. We can now characterize the variation of signals on $J_D$ similar to static graphs by the following result:

\begin{lemma}\label{lemma_variational_characterization}(Variational Characterization of $\mathcal{J_D}$)
    The $2$-Dirichlet $S_2(X)$ of the signals $X$ on $\mathcal{J_D}$ is the quadratic form of the Laplacian $L_{\mathcal{J_D}}$ of $\mathcal{J_D}$ i.e. \\
    \color{asparagus}{
    \begin{equation}
        S_2(X) = \int_{i=0}^{NT} \text{vec}(X)(i) \int_{j=0}^{NT} L_{\mathcal{J_D}}(i,j) \text{vec}(X)(j) di dj = \text{vec}(X)^T L_{\mathcal{J_D}} \text{vec}(X)
    \end{equation}
    }
\end{lemma}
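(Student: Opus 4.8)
The plan is to decompose $S_2(X)$ into a purely spatial contribution and a purely temporal contribution, identify each as a quadratic form of the corresponding Laplacian block, and then reassemble the two using the Kronecker-sum structure of $L_{\mathcal{J_D}}$ recorded in the displayed equations above.

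First, I would recall the elementary fact that for any (weighted) graph with Laplacian $L$ and signal $y$, $\tfrac12\sum_{i\sim j}(y_i-y_j)^2 = y^\top L y$. Expanding the bracket in the definition of $S_2(X)$ and splitting the inner sum over the two edge types of $\mathcal{J_D}$ — the spatial edges living inside each snapshot $G_t$, and the temporal edges joining a node at time $t-dt$ to its copy at time $t$ — yields
\[
S_2(X) \;=\; \underbrace{\tfrac12\int_0^T \sum_{n}\sum_{n_j\overset{G_t}{\sim}n_i}\!\big(X_{n_j,t}-X_{n_i,t}\big)^2 dt}_{\text{spatial}} \;+\; \underbrace{\tfrac12\int_0^T\sum_n \big(\delta X_{n_i,t}\big)^2 dt}_{\text{temporal}}.
\]
Applying the identity at each fixed $t$, the spatial term equals $\int_0^T X_t^\top L_{G_t} X_t\,dt$, which is exactly the quadratic form of $I_T\boxtimes\{L_{G_t}\}$ evaluated on $\text{vec}(X)$. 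For the temporal term, $\delta X_{n_i,t}=\partial_t X_{n_i,t}\,dt$ is the finite difference around the ring of timesteps, so applying the same Dirichlet identity to the ring graph (and passing to the continuum, equivalently integrating by parts and using periodicity in $t$) identifies it with the quadratic form of $L_T\otimes I_N$ on $\text{vec}(X)$.

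Next, I would use that quadratic forms of a Kronecker sum add: since $L_{\mathcal{J_D}} = L_T\oplus\{L_{G_t}\} = L_T\otimes I_N + I_T\boxtimes\{L_{G_t}\}$ after matricization, we get $\text{vec}(X)^\top L_{\mathcal{J_D}}\,\text{vec}(X) = \text{vec}(X)^\top(L_T\otimes I_N)\text{vec}(X) + \text{vec}(X)^\top(I_T\boxtimes\{L_{G_t}\})\text{vec}(X)$, which by the previous step equals the temporal plus spatial contributions, i.e. $S_2(X)$. The one bookkeeping point is to check that the ``last-dimension-first'' ordering used for $\text{vec}$ and for the matricization $[\cdot]:\mathbb{R}^{NT\times T\times N}\to\mathbb{R}^{NT\times NT}$ is consistent, so that the shifted index $j\lfloor j/N\rfloor$ in the definition of $L_{\mathcal{J_D}}$ really selects, within each time slice, the snapshot Laplacian $L_{G_t}$; this is a direct index chase.

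I expect the main obstacle to be the continuous-time temporal term: making precise the passage from the squared finite difference $(\delta X_{n_i,t})^2$ to the quadratic form of the continuous ring Laplacian $L_T$, including the correct treatment of the $dt$ factors and of the measure in the double integral $\int_0^{NT}\!\int_0^{NT}$ appearing in the statement. Concretely, one must show $\tfrac12\int_0^T(\partial_t X_{n_i,t})^2 dt = -\tfrac12\int_0^T X_{n_i,t}\,\partial_t^2 X_{n_i,t}\,dt$ with no boundary term by periodicity, and that $-\partial_t^2$ on the circle of circumference $T$ is exactly the operator $L_T$ whose off-diagonal kernel implements the nearest-timestep coupling; everything else is routine expansion and linear algebra.
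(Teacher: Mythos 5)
Your proposal is correct and follows essentially the same route as the paper's proof: split $S_2(X)$ into the spatial (within-snapshot) and temporal (ring) contributions, identify each with the quadratic form of $I_T\boxtimes\{L_{G_t}\}$ and $L_T\otimes I_N$ respectively via the standard Dirichlet-sum identity (which the paper re-derives inline by expanding the square and symmetrizing), and add the two using the Kronecker-sum structure of $L_{\mathcal{J_D}}$. The continuum/ordering subtleties you flag for the temporal term are real but are glossed over in the paper's own treatment as well.
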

This implies that $L_{\mathcal{J_D}} \succeq 0$ since $S_2(X) \geq 0$, which assures us of the existence of the eigenvalue decomposition. Additionally, the value of $S_2(X)$ is lower when the signal changes slower along the dynamic graph and higher when the signal changes faster. Hence, we can define a notion of signal variation on the dynamic graph that is similar to the variation of signals on static graphs.
Consequently, the eigendecomposition of \ljdmat characterizes signals on the dynamic graph by projecting them onto the optimizers of $S_2(X)$. This means that high-frequency components of the evolving dynamic graph represent sharply varying signals, whereas smoother signals will have a higher magnitude in the low-frequency components. From an optimization perspective, we can view the maximum frequency as the optimal value for the below equation, i.e.,
\begin{align}\label{eq_variational_char_LD}
    f_{\text{max}} &= \underset{x, \norm{x} \leq 1}{\text{max}} \int_{i=0}^{NT} x(i) \int_{j=0}^{NT} L_{\mathcal{J_D}}(i,j) x(j) di dj = \underset{x, \norm{x} \leq 1}{\text{max}} \quad x^T L_{\mathcal{J_D}}(i,j) x
\end{align}
The optimal solution $x$ provides the basis for transforming a dynamic graph signal to obtain its maximum frequency component, denoted by $f_{\text{max}}$. We can obtain the next frequency values by optimizing equation \ref{eq_variational_char_LD} in orthogonal directions. However, this approach has an issue - the eigenvalue decomposition would have to be performed over a large number of nodes. In a real world setting of temporal graphs with $T$ timesteps, this method would have a complexity of $\mathcal{O}((NT)^3)$, which would be prohibitive considering large number of timesteps. 
To address this issue, we relax the objective in equation \ref{eq_variational_char_LD} to include solutions in the pseudospectrum. The solution is presented in the following result, upon which we can formulate a transformation method for temporal graphs.
\begin{lemma}\label{lemma_ctdft_der}
    Consider the variational form $ x^T L_{\mathcal{J_D}} x = \int_{i=0}^{NT} x(i) \int_{j=0}^{NT} L_{\mathcal{J_D}}(i,j) x(j) di dj $. The optimization problem $f = \underset{x, \norm{x} \leq 1}{\text{min}} [ | x^T L_{\mathcal{J_D}} x - \lambda_s | - \epsilon ]_{+}$ has the optimal solution as $y_{\omega} \otimes {z_{l}^{\omega}} $, where $\lambda_s$ is the optimal value of equation \ref{eq_variational_char_LD}, $y_{\omega}$ is the $\omega$-th optimal solution of the variational form of the ring graph, $z_{l}^{t}$ is the $l$-th optimal solution to the variational form of the graph at time t, $[s]_+=\text{max}(s,0)$ and $\epsilon = \mathcal{O}(\delta)$.
\end{lemma}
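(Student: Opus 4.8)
The plan is to reduce the claim to the spectral theory of Kronecker sums together with a single perturbation estimate. First observe that the objective $g(x) := \big[\,\lvert x^{\top} L_{\mathcal{J_D}} x - \lambda_s\rvert - \epsilon\,\big]_{+}$ is nonnegative, so it suffices to exhibit one feasible vector ($\lVert x\rVert \le 1$) at which $g$ vanishes and then to check that $y_\omega \otimes z_l^{\omega}$ is such a vector; optimality of $y_\omega \otimes z_l^{\omega}$ then follows (the minimizer is not unique, so the statement should be read as asserting that the \emph{separable} vector $y_\omega \otimes z_l^{\omega}$ is one optimizer --- it is the separable form that later yields an efficient transform). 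Throughout I would use the identity $L_{\mathcal{J_D}} = L_T \otimes I_N + [\,I_T \boxtimes \{L_{G_t}\}\,]$ derived above and normalise $y_\omega$, $z_l^{\omega}$ to unit norm, so that $\lVert y_\omega \otimes z_l^{\omega}\rVert = 1$.

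In the static case ($\delta = 0$) I would first note that if $L_{G_t} \equiv L_G$ then $[\,I_T \boxtimes \{L_{G_t}\}\,] = I_T \otimes L_G$, so $L_{\mathcal{J_D}} = L_T \oplus L_G$ is an honest Kronecker sum, and invoke the standard fact that its eigenpairs are $(\lambda_\omega + \mu_l,\ y_\omega \otimes z_l)$ as $(\lambda_\omega, y_\omega)$ and $(\mu_l, z_l)$ range over the eigenpairs of $L_T$ and $L_G$. Hence $y_\omega \otimes z_l$ is an exact eigenvector of $L_{\mathcal{J_D}}$, its Rayleigh quotient is $\lambda_\omega + \mu_l$, which equals $\lambda_s$ for the index pair $(\omega, l)$ realising the $s$-th largest joint frequency, and $g$ vanishes there with $\epsilon = 0$ --- consistent with Lemma~\ref{lemma_variational_characterization}.

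For the evolving case I would write $L_{\mathcal{J_D}} = K + R$ with $K := L_T \oplus \bar L_G$, $\bar L_G := \tfrac{1}{T}\sum_t L_{G_t}$ (the time-average, or $\tfrac{1}{T}\int_0^T L_{G_t}\,dt$ in continuous time), and $R := [\,I_T \boxtimes \{L_{G_t} - \bar L_G\}\,]$; since $R$ is block diagonal with blocks $L_{G_t} - \bar L_G$, one has $\lVert R\rVert = \max_t \lVert L_{G_t} - \bar L_G\rVert$, which is $\mathcal{O}(\delta)$ under the Lipschitz hypothesis (and $0$ when the graph is static). Taking $z_l$ to be the $l$-th eigenvector of $\bar L_G$, the vector $y_\omega \otimes z_l$ is an exact eigenvector of $K$ with eigenvalue $\lambda_\omega + \bar\mu_l$, so a one-line Rayleigh-quotient estimate gives $\big\lvert (y_\omega \otimes z_l)^{\top} L_{\mathcal{J_D}} (y_\omega \otimes z_l) - (\lambda_\omega + \bar\mu_l)\big\rvert = \big\lvert (y_\omega \otimes z_l)^{\top} R\, (y_\omega \otimes z_l)\big\rvert \le \lVert R\rVert$. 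Combined with Weyl's inequality $\lvert \lambda_s - \lambda_s(K)\rvert \le \lVert R\rVert$ and $\lambda_s(K) = \lambda_\omega + \bar\mu_l$ for the pair $(\omega, l)$ realising the $s$-th largest value of $\lambda_{\omega'} + \bar\mu_{l'}$, this yields $\lvert (y_\omega \otimes z_l)^{\top} L_{\mathcal{J_D}} (y_\omega \otimes z_l) - \lambda_s\rvert \le 2\lVert R\rVert = \mathcal{O}(\delta)$, so $\epsilon = \mathcal{O}(\delta)$ (with a suitable implicit constant) makes $g$ vanish at $y_\omega \otimes z_l$. To legitimise the per-timestep form $z_l^{\omega}$ actually used in the decomposed transform, I would then appeal to the Davis--Kahan $\sin\Theta$ theorem: because the $L_{G_t}$ are mutually $\mathcal{O}(\delta)$-close, each eigenvector $z_l^{t}$ of $L_{G_t}$ differs from $z_l$ by $\mathcal{O}(\delta / \gamma_l)$ with $\gamma_l$ the relevant spectral gap, so $y_\omega \otimes z_l^{t}$ is still an $\mathcal{O}(\delta)$-approximate eigenvector and hence likewise a minimizer.

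The step I expect to be the main obstacle is controlling $\lVert R\rVert$ honestly, which forces one to pin down conventions the statement leaves informal: the precise meaning of the timestep-wise operators $\boxtimes, \boxplus$ and of the matricization $[\,\cdot\,]$, the continuous-time integral versions of everything, and --- most importantly --- exactly what $\delta$ measures. If $\delta$ is the Lipschitz modulus of $t \mapsto L_{G_t}$ on $[0, T]$, then $\max_t \lVert L_{G_t} - \bar L_G\rVert$ can be as large as $\Theta(\delta T)$, so either a normalisation of the time axis or a bounded-variation-type hypothesis is needed, or the implicit constant in $\epsilon = \mathcal{O}(\delta)$ must be understood to absorb a $T$-dependent factor (a $\log T$ factor appears instead if one passes to the time-DFT basis and bounds the nonzero off-block-diagonal Fourier coefficients $\widehat{L_G}(k)$). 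The remaining ingredients --- the Kronecker-sum eigenpair identity, nonnegativity of $[\cdot]_{+}$, Weyl's inequality, and Davis--Kahan --- are all standard and need no new work.
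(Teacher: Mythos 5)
Your proposal is correct (modulo the caveats you yourself flag) but follows a genuinely different route from the paper. The paper works directly with the candidate $y=[\,a_k \boxtimes \{b_l^t\}\,]$ built from the \emph{per-timestep} eigenvectors, computes $L_{\mathcal{J_D}}y = [\,a_k \boxtimes \{b_l^t\}\,]\,\mathrm{diag}(\{\mu_k+\lambda_l^t\})$, and bounds $\norm{L_{\mathcal{J_D}}y-(\mu_k+\lambda_l^0)y}$ by controlling the drift of the per-timestep eigen\emph{values}: it differentiates $Av_k=\lambda_k v_k$ to get $\dot\lambda_k = w_k^*\dot A v_k$, integrates in time, and arrives at $\norm{L_{\mathcal{J_D}}y-(\mu_k+\lambda_l^0)y}\le \delta N T^2 = \mathcal{O}(\delta)$, concluding that $\mu_k+\lambda_l^0$ lies in the $\epsilon$-pseudospectrum. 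You instead split $L_{\mathcal{J_D}} = K+R$ about the time-averaged Laplacian, exploit that $K$ is an honest Kronecker sum with exact separable eigenpairs, and assemble the bound from standard ingredients (Rayleigh-quotient estimate, Weyl, Davis--Kahan). Your route is more modular and makes two things explicit that the paper leaves implicit: the spectral-gap dependence needed to pass to the per-timestep eigenvectors (the paper only surfaces the multiplicity-one assumption later, in Theorem~\ref{theorem_bounds}), and the fact that $L_T\otimes I_N$ mixes timesteps, so a time-varying $\{b_l^t\}$ is not an exact eigenvector of that term --- a point the paper's computation glosses over and your averaged-$K$ decomposition sidesteps cleanly. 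The paper's route, in exchange, ties the error directly to the rate of change $\norm{\dot A}$ (matching the Lipschitz framing and the form of Theorem~\ref{theorem_bounds}) rather than to the deviation from the time average. Your worry about a hidden $T$-dependent factor in $\epsilon=\mathcal{O}(\delta)$ is well founded and is consistent with the paper's own bound, which absorbs an explicit $NT^2$ into the constant.
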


\section{Constructing an Evolving Graph Fourier Transform}
In the previous section, we have outlined the theoretical framework for the evolving graph Fourier transform. We also obtained a sketch of the transform as a solution to the optimization problem of the variational characterization with pseudospectrum relaxations. This enables us to obtain a simple and efficient form to compute. In this section, building upon the theoretical framework, we propose our formulation of the \emph{Evolving Graph Fourier Transform} (\emph{EFT}).
From lemma \ref{lemma_ctdft_der}, we obtain the orthogonal basis vectors of the desired transform matrix in terms of the kronecker product of the basis vectors of the Fourier Transform ($\ftmat_T$) and Graph Fourier Transform ($\ftmat_G$). Thus, lemma \ref{lemma_ctdft_der} helps us to define the \emph{EFT} in terms of the graph and time Fourier transforms:
\begin{equation}
    \small
    EFT(f_g,\omega) = \sum_n \ftmat_G(f_g,n) \int_{t=0}^{T} f_s(n,t) e^{-j \omega t} dt
\end{equation}
{\color{asparagus}  where $f_g, \omega$ are the graph and temporal frequency components respectively, $f_s(n,t)$ is the signal at node $n$ and time $t$.}
In terms of the matrix representation, the \eft could be expressed, using the Einstein notation \citep{albert1916foundation}, as a Kronecker product of DFT and GFT as
$(\ftmat_{D})_i^j = (\ftmat_T \otimes \{\ftmat_{G_t}\})_{i}^{j \left\lfloor \frac{j}{N} \right\rfloor}$
, which when applied to the columnwise vectorized signal $f_s$ gives the transform in the spectral space.

\eft is one of the solutions in the pseudospectrum of $L_{\mathcal{J_D}}$ as shown in lemma \ref{lemma_ctdft_der}. There also exists other solutions and specifically considering the case where $\epsilon=0$ we obtain the solution to the exact EVD of \ljdmat. 
Let $\ftmat_{AD}$ be the matrix whose rows form the right eigenvectors of $L_{\mathcal{J_D}}$. 
Since $\ftmat_{AD}$ is the absolute decomposition of \ljdmat we term this as \emph{AD} for brevity. We now define error bounds between $\ftmat_{D}$ and $\ftmat_{AD}$. 
\begin{theorem}\label{theorem_bounds}
    Considering bounded changes in a graph $G$ with $N$ nodes over time $T$, the norm of the difference between \eft ($\ftmat_D$) and \aeft ($\ftmat_{AD}$) is bounded as follows: 
    $\norm{\ftmat_D-\ftmat_{AD}} \leq \mathcal{O}\left( N^{\frac{3}{2}} T \varepsilon(\omega_{max},(\Delta \lambda_G)_{min}, (\Delta \lambda_J)_{min}) \right) \left( \norm{\dot{L}_G} \right)_{max}$
    where $(\Delta \lambda_J)_{min}$ and $(\Delta \lambda_J)_{min}$ refer to the minimum difference between the eigenvalues of matrices $L_G$ and \ljdmat respectively, $\dot{L}_G$ is the rate of change of $L_G$ and $\omega_{max}=2\pi$ and $\varepsilon(\omega_{max},\Delta \lambda_G,\Delta \lambda_J) = \frac{ \omega_{max}^{\frac{1}{2}}}{\Delta \lambda_G} +  \frac{\omega_{max}^{2}}{\Delta \lambda_J}$.
\end{theorem}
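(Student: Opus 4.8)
The plan is to treat the rows of $\ftmat_D$ as approximate eigenvectors of the joint Laplacian $L_{\mathcal{J_D}} = L_T \otimes I_N + [I_T \boxtimes \{L_{G_t}\}]$, to control their eigen-residuals in terms of the rate of change $\dot{L}_G$, and then to convert these residuals into an eigenvector-perturbation bound against the exact eigenbasis $\ftmat_{AD}$ via a Davis--Kahan $\sin\Theta$ estimate. By Lemma~\ref{lemma_ctdft_der} the rows of $\ftmat_D$ are the timestep-wise Kronecker products $v_{\omega,l} = y_\omega \boxtimes \{z_l^t\}$, i.e. $v_{\omega,l}(t) = e^{-j\omega t} z_l^t$, with $y_\omega$ an eigenvector of the ring Laplacian $L_T$ (eigenvalue $\mu_\omega \le \omega_{max}^2$, where $\omega_{max}=2\pi$) and $z_l^t$ the $l$-th eigenvector of $L_{G_t}$ (eigenvalue $\nu_l^t$). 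When $L_{G_t}$ is constant in $t$, $L_{\mathcal{J_D}} = L_T \oplus L_G$ and these products are \emph{exactly} the rows of $\ftmat_{AD}$, so the whole discrepancy is generated by the temporal variation of $L_{G_t}$; moreover $L_{\mathcal{J_D}} \succeq 0$ by Lemma~\ref{lemma_variational_characterization}, so $\sin\Theta$ bounds apply to its eigenvectors.

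The heart of the argument is the residual estimate for $\rho_{\omega,l} := L_{\mathcal{J_D}} v_{\omega,l} - (\mu_\omega + \bar\nu_l)\, v_{\omega,l}$, where $\bar\nu_l$ is the time-average of $\nu_l^t$. The block $[I_T \boxtimes \{L_{G_t}\}]$ sends $v_{\omega,l}(t) \mapsto \nu_l^t\, v_{\omega,l}(t)$, contributing $(\nu_l^t - \bar\nu_l)\, v_{\omega,l}(t)$, which is orthogonal to $v_{\omega,l}$ by the choice of $\bar\nu_l$ and, by Weyl's inequality $|\dot\nu_l^t| \le \norm{\dot{L}_{G_t}}$, controlled by $\norm{\dot{L}_G}_{max}$. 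The block $L_T \otimes I_N$ differentiates the time profile $e^{-j\omega t} z_l^t$, producing $\mu_\omega v_{\omega,l}(t)$ plus corrections proportional to $\omega\,\dot{z}_l^t$ and $\ddot{z}_l^t$; first-order eigenvector perturbation for $L_{G_t}$ gives $\norm{\dot{z}_l^t} \le \norm{\dot{L}_{G_t}}/\dlambdagmin$, with the $\ddot{z}_l^t$ term of higher order in $\dot{L}_G$ and absorbed under the bounded-changes hypothesis. Propagating the eigenvector-variation correction through $L_T$ brings in powers of the temporal bandwidth $\omega_{max}$, while integrating over $t\in[0,T]$ and summing over the $N$ graph modes accounts for the $T$ and $N$ factors. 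Together these two sources — the within-timestep eigenvector perturbation (carrying $1/\dlambdagmin$) and, after the next step, the joint-Laplacian $\sin\Theta$ division (carrying $1/\dlambdajmin$) — assemble into the stated function $\varepsilon(\omega_{max},\dlambdagmin,\dlambdajmin) = \omega_{max}^{1/2}/\dlambdagmin + \omega_{max}^{2}/\dlambdajmin$.

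Next I would apply the $\sin\Theta$ theorem mode by mode: with residual $\rho_{\omega,l}$ and joint spectral gap at least $\dlambdajmin$ around $\mu_\omega + \bar\nu_l$, the matching eigenvector $w_{\omega,l}$ of $L_{\mathcal{J_D}}$ (a row of $\ftmat_{AD}$) satisfies, after phase alignment, $\norm{v_{\omega,l} - w_{\omega,l}} \lesssim N\sqrt{T}\,\varepsilon(\omega_{max},\dlambdagmin,\dlambdajmin)\,\norm{\dot{L}_G}_{max}$. Summing over all $NT$ pairs $(\omega,l)$ and bounding the operator norm by the Frobenius norm, $\norm{\ftmat_D - \ftmat_{AD}} \le \big(\sum_{\omega,l}\norm{v_{\omega,l}-w_{\omega,l}}^2\big)^{1/2} \le \sqrt{NT}\cdot N\sqrt{T}\,\varepsilon\,\norm{\dot{L}_G}_{max} = \mathcal{O}\big(N^{3/2}T\,\varepsilon\,\norm{\dot{L}_G}_{max}\big)$, which is the claim.

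The main obstacle is the residual estimate: one must correctly push the first-order eigenvector-perturbation bound for the time slices $L_{G_t}$ through the time-Fourier operator $L_T$ — this is precisely why $\omega_{max}$ and $\dlambdagmin$ appear together in the first summand of $\varepsilon$ — and then carry out faithful dimension and norm bookkeeping, since the choice of reference eigenvalue $\bar\nu_l$, the norm used for $\norm{\dot{L}_G}_{max}$, and the passage from scalar per-mode estimates to the Frobenius norm are what pin down the exponents $N^{3/2}$ and $T$; a looser accounting gives different powers. Once the residual and the two eigenvalue gaps $\dlambdagmin, \dlambdajmin$ are in hand, the Davis--Kahan step and the summation are routine.
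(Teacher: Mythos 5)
Your route is genuinely different from the paper's: you treat the rows of $\ftmat_D$ as approximate eigenvectors of $L_{\mathcal{J_D}}$, bound their eigen-residuals, and invoke a Davis--Kahan $\sin\Theta$ step, whereas the paper never forms a residual against $L_{\mathcal{J_D}}$ at all. It instead inserts the intermediate transform $\ftmat_J$ (the exact eigenbasis of the \emph{static} joint Laplacian $L_J = L_T \otimes I_N + I_T \otimes L_G$), writes $\norm{\ftmat_D-\ftmat_{AD}} \le \norm{\ftmat_D-\ftmat_J}+\norm{\ftmat_J-\ftmat_{AD}}$, and bounds each leg by integrating the first-order eigenvector flow $\dot v_k = \sum_{j\neq k}\frac{w_j^*\dot A v_k}{\lambda_k-\lambda_j}v_j$ along the deformation of $L_{G_t}$ and of $L_J$ respectively. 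That decomposition is not cosmetic, and its absence is the gap in your argument: in your single-shot scheme the term of the residual that carries $1/\dlambdagmin$ (the $\omega\,\dot z_l^t$ correction from pushing the time-varying eigenvectors through $L_T$) is still part of the residual that the $\sin\Theta$ theorem divides by the joint gap, so the natural output is a bound of the form $\frac{A}{\dlambdajmin}+\frac{B}{\dlambdagmin\,\dlambdajmin}$ --- a \emph{product} of gap reciprocals --- not the stated sum $\varepsilon=\frac{\omega_{max}^{1/2}}{\dlambdagmin}+\frac{\omega_{max}^{2}}{\dlambdajmin}$. Your claim that the two sources ``assemble into'' separate summands is asserted, not derived; to get the two gaps into disjoint denominators you must split the comparison \emph{before} invoking either gap, e.g.\ via exactly the intermediate $\ftmat_J$ the paper uses (bound $\norm{v_{\omega,l}-u_{\omega,l}}$ by the per-timestep GFT drift alone, then apply the residual/gap argument only to $u_{\omega,l}$ against $L_{\mathcal{J_D}}$ with residual $\norm{(L_{\mathcal{J_D}}-L_J)u}$).

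A secondary issue: your residual for the $L_T\otimes I_N$ block produces a $\ddot z_l^t$ term that you dismiss as ``higher order in $\dot L_G$.'' It is not --- $\ddot z_l^t$ involves $\ddot L_{G_t}$ and is not controlled by the theorem's sole hypothesis that the \emph{rate} of change $\norm{\dot L_G}$ is bounded, so your route silently needs an extra second-order smoothness assumption on $t\mapsto L_{G_t}$. The paper's transport argument avoids applying $L_T$ to the time-varying eigenvectors and therefore never meets this term. Your dimensional bookkeeping ($\sqrt{NT}$ over modes times a per-mode $N\sqrt{T}$ giving $N^{3/2}T$) and the use of the positive-semidefiniteness of $L_{\mathcal{J_D}}$ from Lemma~\ref{lemma_variational_characterization} are fine; the assembly of $\varepsilon$ and the $\ddot z$ term are the two points that need repair.
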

The above theorem states that as the structure on the graph evolves infinitesimally, the difference between $\ftmat_D$ and $\ftmat_{AD}$ is bounded from above by the change in the graph matrix representation (laplacian/adjacency). This property is desirable since it allows us to approximate $\ftmat_{AD}$, which is formed by the eigendecomposition of \ljdmat\ and has a physical interpretation, using the defined $\ftmat_D$ that is easy to compute. 
{\color{asparagus} The above bound is finite if 1) The rate of change of the graph with time is bounded. 2) The eigenvalues have a multiplicity of 1.}
In such cases, \emph{EFT} characterizes signals on the dynamic graph by their proximity (projection) to the optimizers of $S_2(X)$ defined in lemma \ref{lemma_variational_characterization}. The physical implication of this is that applying \eft, the high-frequency components correspond to sharply varying signals on a dynamic graph, while low-frequency components correspond to smoother signals.
Hence, the norm of the difference between \eft and \aeft are bounded from above by the rate of evolution of the graphs. 

For \textit{computational purpose in real-world applications}, the sampled form of \eft can be obtained by sampling $T$ snapshots of the dynamic graph signal at uniform time intervals. We now get a dynamic graph $\{(\mathcal{V}_t,\mathcal{E}_t)\}, t \in \{0, T\}$ the edges ($\mathcal{E}_t$) of which by definition evolves with time. We consider the node set $\mathcal{V}$ to be fixed, i.e., no new nodes are added. All the nodes ($|\mathcal{V}|=N$) are known from the start, and the graph may contain isolated nodes.
{\color{asparagus} In case of node editions, we could create dummy isolated nodes with varying node signals and edge connectivity information. }
Without loss of generality, consider a 1-dimensional temporal signal, uniformly sampled at $T$ intervals, residing on the graph nodes. 
Let $X \in R^{N \times T}$ represent the temporal signal on the graph nodes. The Fourier transform (DFT) (with DFT matrix $\ftmat_T$) independently for each node is $DFT(X) = X \ftmat_T^{\top}$.
Further, the $\emph{GFT}$ for the graph $G_t \equiv (\mathcal{V}_t,\mathcal{E}_t)$ at time $t$ is given as $GFT(X_t) = \ftmat_{G_t} X_t$, 
where $X_t \in R^{N}$ is the signal on the graph nodes at time $t$. 
In order to compute the dynamic graph transform along the graph domain as well as the temporal dimension, we can \emph{collectively} perform both the operations. 

Consider $\{\ftmat_{G_t}\} \in R^{N \times N \times T}$ as the tensor containing the graph Fourier basis at each timestep. Then using Einstein notation \citep{albert1916foundation}, we write \emph{EFT} as
\begin{equation}\label{jgft_einsum}
    \boxed{\left( \mathbf{EFT}(\{G_t\}; X) \right)_{i}^{j} = \left( \ftmat_{G_t} X \right)_{i}^{kk} \left( \ftmat_T^{\top} \right)_{k}^{j}  }
\end{equation}

where $i,j,k$ are tensor indices. 
Next, we aim to define a transformation matrix for \emph{EFT} as in DFT and GFT.
For this we make use of the Kronecker product ($\otimes$) between tensors. 
We then get the matrix form of \eft as the following expression:
\begin{align} \label{ctdft_XG}
\small
    \left( \mathbf{EFT}(\{G_t\}; X) \right)_{i}^{j} = \left( \hat{X}_G \right)_i^j &= \left( \ftmat_{G_t} X \right)_{i}^{kk} \left( \ftmat_T^{\top} \right)_{k}^{j} = \left( \ftmat_T \otimes \{\ftmat_{G_t}\} \right)_{(j*N + i)}^{km} x_{k}
\end{align}
Thus, we have $ \hat{x}_{j*N + i} =  \left( \ftmat_T \otimes \{\ftmat_{G_t}\} \right)_{(j*N + i)}^{km} x_{k}$ or $\hat{x} = \ftmat_D x$. In the above equations, $\hat{X}_G$ is the \emph{EFT} of signal $X$ over dynamic graph $\{G_t\}$, $x, \hat{x} \in R^{NT}$ are the columnwise vectorized form of $X, \hat{X}_G \in R^{N \times T}$ and 
$m = \left\lfloor \frac{k}{N} \right\rfloor$.
$\ftmat_D \in R^{NT \times NT}$ is the \emph{EFT} matrix over dynamic graph $\{G_t\}$ with 
$\left( \ftmat_D \right)_{i}^{j} = \left( \ftmat_T \otimes \ftmat_{G} \right)_{i}^{j \left\lfloor \frac{j}{N} \right\rfloor}$.



We remark from equation \ref{jgft_einsum} of \eft, that the following desirable properties (over the exact eigendecompostion of the joint laplacian) are satisfied: 1) The equation imparts interpretibility to the frequency components, whether belonging to the time or vertex domain, as compared to the exact eigendecomposition. This is possible because we are able to decompose the transform into the individual transforms of each domain. 2) The transform equation is computationally efficient as compared to the exact eigendecomposition of the joint laplacian. 
Specifically \eft reduces the computational complexity for the dynamic graph ($T$ timesteps) from a factor of $\mathcal{O}(T^3)$ to $\mathcal{O}(T + T \log(T))$.

\textcolor{asparagus}{Having derived the \eft transform, we state and prove its properties in the appendix \ref{sec:properties}.}
The illustration between \eft and other transforms is in Figure \ref{fig:ctdft_motivation}. 
The figure shows transforms (\emph{GFT, JFT, DFT, EFT)} in a circle, and arrows from one transform to the next indicate that the source transform can be obtained by the destination transform using the simulation annotated on the edges.
{
\color{asparagus}
For example, the \emph{GFT} of a ring graph ($\mathcal{T}$) gives the \emph{DFT} and thus the DFT can be simulated by \emph{GFT} using graph $\mathcal{T}$. Similarly \emph{DFT} can be simulated by \eft when the number of nodes $N=1$. 
Also the \emph{GFT} of the temporal ring of a static graph (topologically equivalent to a torus), where the nodes and edges remain constant with time, gives the \eft and vice versa (when time $T=1$). However when the graph structure changes with time \emph{GFT} cannot be used to simultae \eft.
Thus, we can also look at the \eft as a generalization of the previous transforms.
}
We briefly explain the task specific implementation of these modules in the below subsection and focus more on the representations and results in the following sections. 

\subsection{Implementation Details}
{\color{asparagus}
Having obtained the representations using the proposed transform, we intend to perform filtering in spectral space for dynamic graphs. Since our idea is to perform collective filtering along the vertex and temporal domain in \emph{EFT}, we need two modules to compute $\ftmat_{G_t}$ (vertex aspect) and $\ftmat_T$ (temporal aspect), respectively, in equation \ref{jgft_einsum} of \emph{EFT}. 
We now briefly explain these modules with details in appendix \ref{sec:architecture_apndx}.

\noindent \textbf{Filtering along the Vertex Domain:}
This module computes the convolution matrix $\ftmat_{G_t}$ in equation \ref{jgft_einsum}.
The frequency response of the desired filter is approximated as $\hat{\Lambda}_l = \sum_{k=0}^{\textcolor{black}{O_f}} c_k T_k(\Tilde{\Lambda})$,
where $O_f$ is the polynomial/filter order, $T_k$ is the Chebyshev polynomial basis, $\Tilde{\Lambda} = \frac{2\Lambda}{\lambda_{max}} - I$, $\lambda_{max}$ is the maximum eigenvalue and $c_k$ is the corresponding \textit{filter coefficients}. 
The convolution of the graph signal $X$ with the filter ($X \ast \Lambda_l$) gives the desired filter response in the vertex domain. 

\noindent \textbf{Filtering along the Temporal Domain:}
After performing filtering in the vertex domain, we aim to filter over the temporal signals using  $\ftmat_T$ as in equation \ref{jgft_einsum}.
Formally, let $X_{t} \in R^{d}$ be the signal of a node at time $t$. 
Let $X = \{X_t\} \in R^{T \times d}$ be the time ordered matrix of embeddings of the node. This is converted to the frequency domain ($\hat{X} \in R^{T \times d}$) using the matrix $\ftmat_T$ as $\hat{X} = \ftmat_T X$.
Then we multiply $\hat{X}$ element-wise by a temporal filter $F_T \in R^{T \times d}$ to obtain the filtered signal $\hat{X}_f = F_T \odot \hat{X}$ which is then converted back to the temporal domain by using the inverse transform $\ftmat_T^*$ to get $X_f = \ftmat_T^* \hat{X}_f$.
$X_f$ is the filtered signal in the time-vertex domain of the dynamic graph. 

}

\section{Experimental Setup}\label{sec:experiment}
\textbf{Model Implementation and Datasets:} Considering \emph{EFT} is a spectral transform, we need a base model to induce \emph{EFT} in it. We select transformer as the base model inspired from \citep{FMLPRec,bastos2022how} that induce learnable filters into a vanilla transformer for downstream tasks (implementation is inspired from \citep{FMLPRec}, hence, details are in appendix). To illustrate the efficacy of the representations obtained from \emph{EFT}, we consider eight datasets. We name our model \emph{EFT-T}. The first three (Amazon Beauty, Games, CD in Table \ref{tab:dataset_stats}) are large continuous time dynamic graph datasets from sequential recommendation (SR) setting \citep{huang2023temporal}, spread over two decades. We inherit these datasets, dynamic graph construction process in SR setting, and metric from \citep{DGSR}. Other datasets \citep{evolvegcn} (UCI, AS, SBM, Elliptic, Brain) are standard (discrete) dynamic graph datasets to understand the generalizability of our method and contain a sequence of time-ordered graphs. 
Details on datasets, metrics, and experiment settings are in Appendix (cf., Table  \ref{tab:dataset}). 
Experiment code and associated datasets are on Github: \url{https://github.com/ansonb/EFT}. 

\textbf{Baselines:} We use baselines depending on the experiment setting for fairness. For SR link prediction, we use strong baselines from previous best \citep{DGSR}: BPR-MF \citep{rendle2009bpr}, FPMC \citep{rendle2010factorizing}, GRU4Rec+ \citep{GRUrec+}, Caser \citep{tang2018personalized}, SASRec \citep{Sasrec}, HGN \citep{ma2019hierarchical}, TiSASRec \citep{TiSASRec}, SRGNN \citep{SRGNN}, HyperRec \citep{Hyperrec}, FMLPRec \citep{FMLPRec}, and DGSR \citep{DGSR}. For link prediction, node classification on discrete dynamic graph datasets, we rely on state of the art approaches of this setting \citep{ledg}: GCN \citep{kipf2016gcn}, GAT \citep{velivckovic2018graph}, GCN-GRU \citep{evolvegcn}, DynGEM \citep{DBLP:journals/corr/abs-1805-11273}, GAEN \citep{shi2021gaen}, EvolveGCN \citep{evolvegcn}, dyngraph2vec (dg2vec) \citep{DBLP:journals/kbs/GoyalCC20}. 

\section{Results and Discussion} \label{sec:evaluation}
This section reports the various experiment results, supporting our theoretical contributions.


\begin{wrapfigure}{l}{0.28\textwidth}
    \begin{center}\vspace{-6mm}
    \includegraphics[width=0.27\textwidth]{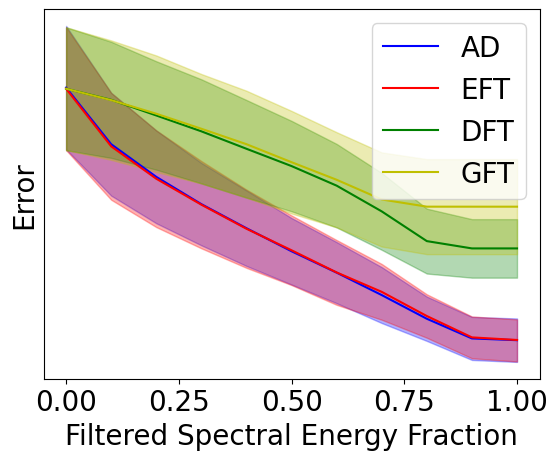}
    \end{center}\vspace{-2mm}
    \caption{Reconstruction error on noisy synthetic data.\vspace{-2mm}}
    \label{fig:recon_error}
\end{wrapfigure}
\noindent \textbf{Denoising and reconstruction on synthetic dataset with perturbation:}
Here, we aim to study whether \eft can better filter out noise from a dynamic graph than
DFT \citep{sundararajan2023discrete} and GFT \citep{ortega2018graph}.
The graphs are generated by sampling edge weights from a random normal distribution and evolved by perturbing the edge weights from the previous timestep. The graph signals are sampled from the eigenvectors of the graphs at each timestep, while the temporal signals are sampled from a sinusoidal signal. To add an element of complexity and realism, noise is induced along both the graph vertex and time signals (details in appendix \ref{sec:synthetic_dataset_apndx}). As a result, the dynamic graph signals evolve with time while being induced with noise along both dimensions. 
We hypothesize that using \eft, which transforms collectively across time and vertex dimensions, will result in better denoising and signal reconstruction compared to using GFT or DFT, which only performs filtering in one dimension. 
Our hypothesis is confirmed in Figure \ref{fig:recon_error}, 
which shows a decrease in error as the spectral energy of the signal is preserved while noise is filtered. 
Moreover, \eft yields comparable results to absolute transform (\aeft) while requiring less computational resources.
\begin{table*}[!t] 

\caption{For link prediction on large temporal graphs of sequential recommendation setting, \label{tab:SR_results} table shows our model comparison (EFT-T) on the metrics \textit{Recall@10} and \textit{NDCG@10}. The best results are shown in boldface. The second best result has been underlined. 
The improvement of our method over the best-performing baseline is statistically significant with p < 0.05.
}
\adjustbox{max width=\textwidth}{
\begin{tabular}{|c c c c c c c c c c|l|}
\hline
& \textbf{GRU4Rec+} & \textbf{Caser} & \textbf{SASRec} & \textbf{HGN} & \textbf{TiSASRec} & \textbf{FMLPRec} & \textbf{SRGNN} & \textbf{HyperRec} & \textbf{DGSR} & \textbf{EFT-T} \\\hline
\multicolumn{10}{|c|}{Recall@10} \\
\hline
\textit{Beauty} & 43.98 & 42.64 & 48.54 & 48.63 & 46.87 & 47.47 & 48.62 & 34.71 & \underline{52.40} & \textbf{53.23} \\
\textit{Games} & 67.15 & 68.83 & 73.98 & 71.42 & 71.85 & 73.62 & 73.49 & 71.24 & \underline{75.57} & \textbf{77.78} \\
\textit{CDs} & 67.84 & 61.65 & 71.32 & 71.42 & 71.00 & 72.41 & 69.63 & 71.02 & \underline{72.43} & \textbf{75.42}  \\
\hline
\multicolumn{10}{|c|}{NDCG@10} \\ 
\hline
\textit{Beauty} & 26.42 & 25.47 & 32.19 & 32.47 & 30.45 & 32.38 & 32.33 & 23.26 & \underline{35.90} & \textbf{37.10}\\
\textit{Games} & 45.64 & 45.93 & 53.60 & 49.34 & 50.19 & 51.26 & 53.35 & 48.96 & \underline{55.70} & \textbf{58.65}\\
\textit{CDs} & 44.52 & 45.85 & 49.23 & 49.34 & 48.97 & 53.31 & 48.95 & 47.16 & \underline{51.22} & \textbf{54.99}\\
\hline
\end{tabular}
}

\end{table*}

\noindent \textbf{Compactness of \eft}:
Compaction refers to the ability of the transform to summarize the data compactly. A transform with good compaction is desirable as it summarizes the signals well in the frequency components, which can be used for efficient processing by downstream models. In this experiment, we verify the compaction properties of the proposed transform for the time-vertex frequencies on the temporal mesh graphs \citep{grassi2017time} concerning GFT and DFT. In order to test this, we remove varying percentile of the frequency components from the transformed frequency domain of signal $X$. We then apply the inverse transform to obtain the signal $X_r$. We plot the error $\frac{\norm{X-X_r}_F}{\norm{X}_F}$ vs the percentile of components removed. 
From figure \ref{fig:compactness_dog}, \ref{fig:compactness_dancer} we can see that \eft has a lower error and better compaction and thus is able to summarize the data better than the baselines that only transform along a single dimension of vertex or time. 
\begin{wrapfigure}{l}{0.77\textwidth}
     \begin{subfigure}[b]{0.25\textwidth}
         \centering
         \includegraphics[width=\textwidth]{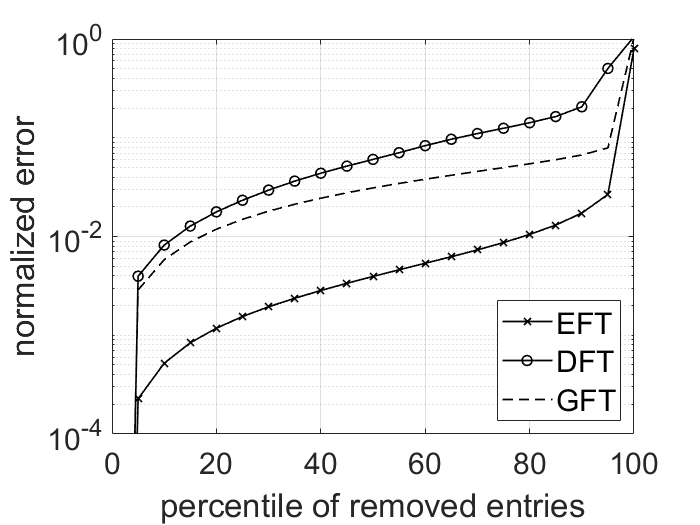}
         \caption{$Dog$}
         \label{fig:compactness_dog}
     \end{subfigure}
     \hfill
     \begin{subfigure}[b]{0.25\textwidth}
         \centering
         \includegraphics[width=\textwidth]{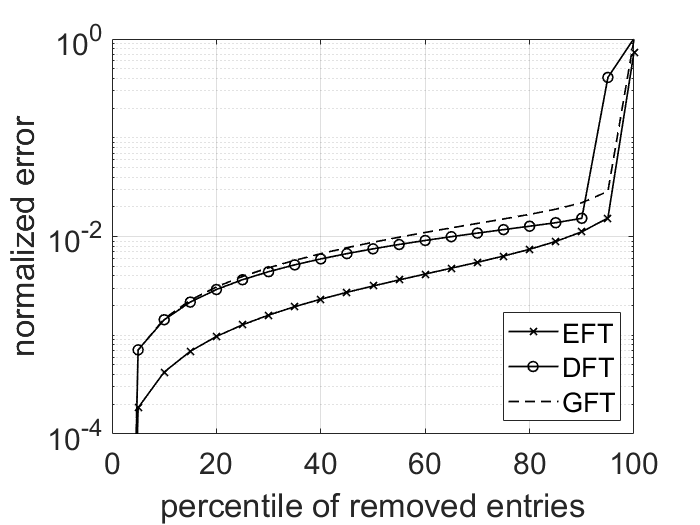}
         \caption{$Dancer$}
         \label{fig:compactness_dancer}
     \end{subfigure}
     \hfill
     \begin{subfigure}[b]{0.25\textwidth}
         \centering
         \includegraphics[width=\textwidth]{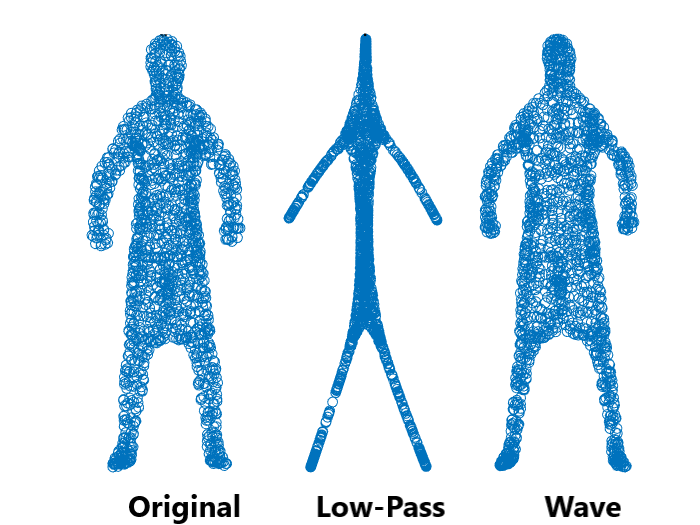}
         \caption{$Dancer$}
         \label{fig:hand_wavy}
     \end{subfigure}
    \caption{Representations on dynamic mesh datasets. Left (a,b): Reconstruction error on the datasets illustrating the compactness of \eft. Right (c): Illustration of filtering using \eft on the dynamic mesh of a Dancer.
    \vspace{-3mm}
    }
    \label{fig:recon_error_compactness}
\end{wrapfigure}

\noindent \textbf{Illustration of filtering on temporal mesh}
Figure \ref{fig:hand_wavy} shows an example of collective filtering of a dynamic mesh representing a dancer \citep{grassi2017time}.
Similar to \citet{grassi2017time}, we implement the following filters: (a) a low-pass filter that jointly attenuates high frequency components of the dynamic graph, and (b) a wave filter whose frequency response is described in Eq. (19) of \citet{grassi2017time}. The former filter gives us the frame of the mesh with stiff manoeuvers, whereas the fluid filter produces fluid movements. This experiment shows that \eft can enhance the frequency components non-linearly. This also hints towards why \eft performs better on evolving temporal graphs in subsequent experiments. 

\begin{wraptable}{r}{0.64\textwidth}
	\centering
    \caption{Results for Link Prediction (UCI, SBM, AS) and Node Classification (Brn, Ell) tasks. Best values are in bold and second bests are underlined.}
    \vspace{-7pt}
 \resizebox{0.62\textwidth}{!}{
	\begin{tabular}{c|cc|cc|cc|c|c}
	\hline
	\multirow{2}{*}{}Datasets&
	\multicolumn{2}{c|}{SBM}&
	\multicolumn{2}{c|}{UCI}&
	\multicolumn{2}{c|}{AS}&Ell&Brn\cr\cline{2-9}
	Metrics&MAP&MRR&MAP&MRR&MAP&MRR&F1&F1
	\cr
	\hline
	GCN  & 0.189 & 0.014 & 0.000 &  0.047 & 0.002 & 0.181 & 0.434& 0.232 \cr
	GAT  & 0.175 & 0.013 & 0.000 &  0.047 & 0.020 & 0.139 &0.451 &0.121  \cr
	DynGEM & 0.168 & 0.014 & 0.021 &  0.106 & 0.053 & 0.103 &0.502 &0.225  \cr
	GCN-GRU  & 0.190 & 0.012 & 0.011& 0.098 & 0.071 & 0.339 &\underline{0.575} &0.186  \cr
	dg2vec \tiny{V1} & 0.098 & 0.008 & 0.004 &  0.054 & 0.033 & 0.070 &0.464 &0.191  \cr
	dg2vec \tiny{V2} & 0.159 & 0.012 & 0.020 &  0.071 & 0.071 & 0.049 &0.442 &0.215   \cr
    GAEN & 0.1828	& 0.008 & 0.000 &0.049 & 0.130 & 0.051 &0.492 &0.205  \cr
	EGCN-H & 0.195 & 0.014 & 0.013 &  0.090 & 0.153 & 0.363 &0.391 &0.225 \cr
	EGCN-O  & \underline{0.200} & 0.014 & 0.027 & 0.138 & 0.114 & 0.275 &0.544 &0.192  \cr
	LED-GCN & 0.196 & \underline{0.015} & \underline{0.032}   & \underline{0.163} & 0.193 & \underline{0.469} &0.471 &\underline{0.261}  \cr
	LED-GAT &0.182  & 0.012     & 0.026 & 0.149  & \underline{0.233} & 0.384 &0.503 &0.150        \cr
	\hline
        EFT-T & \textbf{0.250} & \textbf{0.024} & \textbf{0.055} & \textbf{0.181} & \textbf{0.672} & \textbf{0.689} &\textbf{0.616} &\textbf{0.308}   \cr 
	\hline
	\end{tabular}
 }
        \vspace{-4pt}
	\label{tab:performance_lp}
\end{wraptable}
\noindent \textbf{Performance comparison on (continuous) large-scale temporal graph datasets:}
The results on the large-scale SR datasets are in Table \ref{tab:SR_results} and \emph{EFT-T} outperforms baselines on all datasets. We note that our gains to the best baseline are higher in CDs, followed by the Games and Beauty dataset. We observe that as the density of the graph and length of sequences in the data increases (e.g., CD dataset), the performance of \emph{EFT-T} enhances. We believe that as graph density increases, higher-order connections may encompass noisy relations, a challenge conventional baselines struggle to filter out, whereas our method effectively handles this noise. Also, \eft effectively captures global interactions as it considers the temporal aspect in the collective filtering module. 
Furthermore, compared to the FMLPRec model that induces DFT into a transformer, \emph{EFT-T} performs significantly better, concluding the necessity of capturing evolving spectra of temporal graphs. 
We also note that among the graph-based methods, SRGNN only considers connectivity information from the sequence graph, whereas HyperRec uses higher-order connectivity information. This indicates that not using the graph information effectively hampers performance but using higher-order connectivity without filtering to remove noise also degrades the results.

\noindent \textbf{Performance comparison on discrete temporal graph datasets:}
Table \ref{tab:performance_lp} summarizes link prediction and node classification results. Across datasets, our model significantly outperforms all baselines, which focus on learning local dependencies. 
It illustrates our framework's effectiveness in filtering noise and amplifying useful signals in evolving temporal graphs. 

\begin{figure}{l}[]
    \begin{center}\vspace{-5mm}
     \begin{subfigure}[b]{0.2\textwidth}
         \centering
         \includegraphics[width=\textwidth]{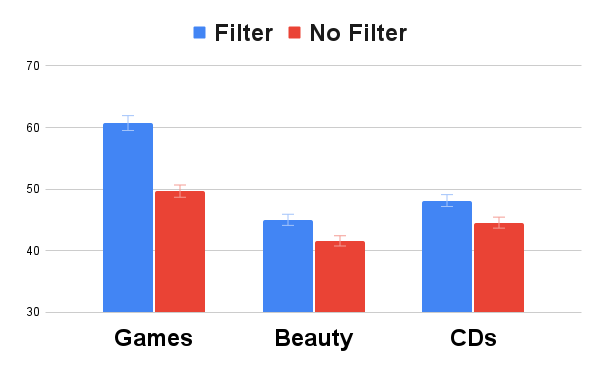}
         \caption{$Recall@10$}
         \label{fig:fee_r10}
     \end{subfigure}
     \hfill
     \begin{subfigure}[b]{0.2\textwidth}
         \centering
         \includegraphics[width=\textwidth]{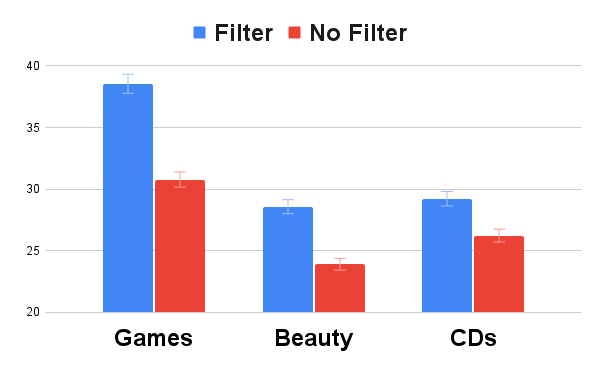}
         \caption{$NDCG@10$}
         \label{fig:fee_n10}
     \end{subfigure}
     \hfill
     \begin{subfigure}[b]{0.2\textwidth}
         \centering
         \includegraphics[width=\textwidth]{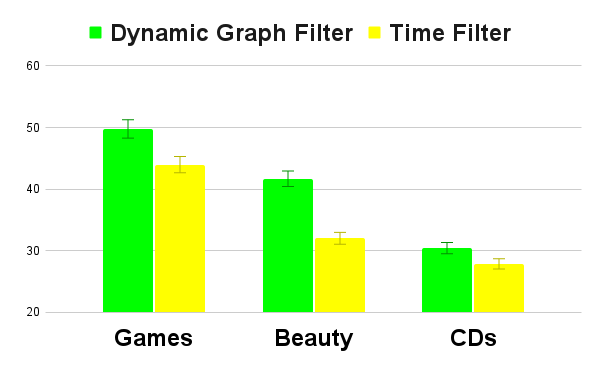}
         \caption{$Recall@10$}
         \label{fig:fee_r10}
     \end{subfigure}
     \hfill
     \begin{subfigure}[b]{0.2\textwidth}
         \centering
         \includegraphics[width=\textwidth]{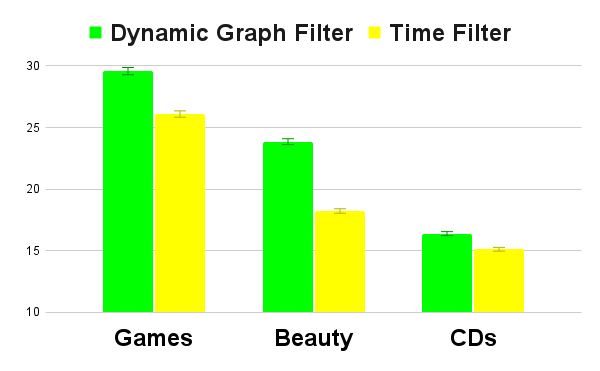}
         \caption{$NDCG@10$}
         \label{fig:fee_n10}
     \end{subfigure}
    \end{center}
    \vspace{-2mm}
    \caption{Effect of inducing 1) semantic noise in embeddings with and without filters (a-b) 2) structural noise in the form of graph perturbations with and without graph filters (c-d), on the performance of \eft. We consider large-scale SR setting.\vspace{-5mm}}
    \label{fig:filter_effectiveness}
\end{figure}
\noindent \textbf{Effectiveness of filtering module (Figure \ref{fig:filter_effectiveness}):}
Our approach focuses on capturing useful frequencies along vertex and time dimensions collectively while filtering the noise. Hence, in this experiment, we aim to understand the effectiveness of the filters along both graphs (vertex) and time dimension in the presence of \textit{explicitly added noise}. 
\begin{wrapfigure}{r}{0.52\textwidth}
    \begin{center}
     \begin{subfigure}[b]{0.24\textwidth}
         \centering
         \includegraphics[width=\textwidth]{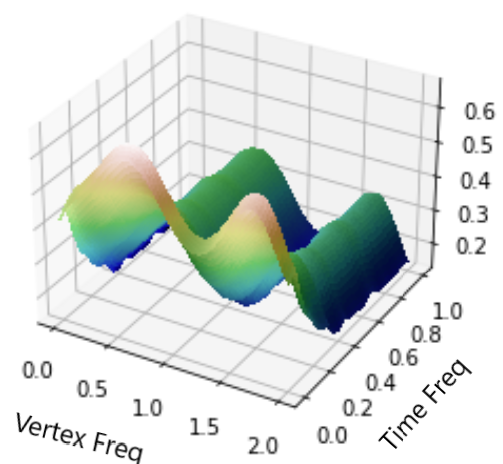}
         \caption{$Games$}
         \label{fig:filter_vis_games}
     \end{subfigure}
     \hfill
     \begin{subfigure}[b]{0.24\textwidth}
         \centering
         \includegraphics[width=\textwidth]{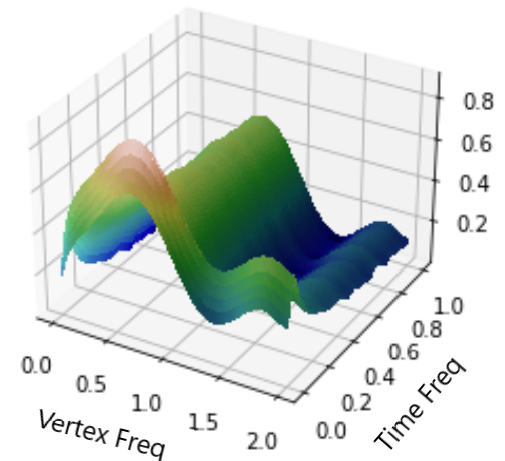}
         \caption{$CDs$}
         \label{fig:filter_vis_cd}
     \end{subfigure}
    \end{center}
    \vspace{-2mm}
    \caption{Filter frequency responses learnt by \eft on dynamic graph datasets. The x-axis shows the vertex frequency (0-2), y axis shows the normalized temporal frequency and z axis shows the magnitudes of the normalized frequency response.
    \vspace{-7mm}
    }
    \label{fig:filter_vis}
\end{wrapfigure}

Firstly, we induce \emph{semantic noise} into the system by adding a random vector (sampled from a normal distribution) to the node embeddings. 
Then, we run experiments on our model with and without learnable collective graph-time filters. 
To ensure a fair comparison, we keep the parameters in both models the same and simulate the no-filter configuration by using a uniform distribution for the frequency response (all-pass filter).
In the presence of noise, the performance of configuration with filters is much better ($p < 0.01$) than that without any filtering. 
Next, we induce \emph{structural noise} into the system by adding random nodes/edges.
We observe that on inducing structural noise, the performance of the configuration with graph filters is statistically better ($p<0.01$ using a paired t-test) compared to the one without, confirming that collective filtering is needed to be robust to structural noise in dynamic graphs. Additionally, we plotted the filter frequency responses of \eft on the Games and CDs datasets in Figure \ref{fig:filter_vis}. The figure shows dominating low-frequency response and higher-frequency components, indicating global aggregation for the long-range interactions.

\noindent

\section{Conclusion} \label{sec:conclusion}
In this paper, we introduce a novel approach to transform temporal graphs into the frequency domain, grounded on theoretical foundations. 
We propose pseudospectrum relaxations to the variational objective obtaining a simplified transformation, 
making it computationally efficient for real-world applications. 
We show that the error between the proposed transform and the exact solution to the variational objective is bounded from above and study its properties. We further demonstrate the practical effectiveness for temporal graphs. 
{\color{asparagus} In the current scope, we do not consider generic signed and directed graphs. To mitigate this, we suggest future works explore generalizing the Laplacian and the resulting transform to such graphs, leveraging techniques proposed in \citep{mercado2016clustering,cucuringu2021regularized}.}
Our work opens up new possibilities for dynamic graph analysis and representation learning, and we encourage researchers to explore potential of \eft as a spectral representation of the evolving graph in downstream graph representation learning models.

\bibliography{bibliography}
\bibliographystyle{iclr2024_conference}


\newpage
\begin{appendix}

\section{Preliminaries} \label{sec:preliminaryappendix}
Here we give an extended discussion of the preliminaries which could not be accommodated in the main paper due to space constraints.

\subsection{Discrete Fourier Transform}\label{subsec:dft}
The Discrete Fourier Transform (DFT) is used to obtain the frequency representation of a sequence of signal values sampled at equal intervals of time. The magnitude and phase of the frequency components are obtained by multiplying the signal values by complex sinusoids of the respective frequencies. Consider a signal $x$ sampled at $N$ intervals of time $t \in [0, N-1]$ to obtain the sequence $\{x_t\}$. The DFT of $x_t$ is then given by,
\begin{equation}
    X_k = \sum_{t=0}^{N-1} x_t e^{-i \omega_t k}, \quad \omega_t=\frac{2 \pi t}{N}
\end{equation}
The transformed sequence $X_k$ gives the values of the signal in the frequency domain. If we represent $X$ as the vector form of the signal we can define the DFT matrix $\ftmat_T$ such that $X_k = \ftmat_T X$. Thus $X_k$ is the complex valued spectrum of $\{x_t\}$ at frequency $\omega_t$. We can perform filtering by removal of noisy frequencies in this spectral domain. As required by signal processing applications we can then obtain the signal sequence in the time domain from the frequency domain $\{X_k\}$ using the Inverse Discrete Fourier Transform (IDFT) as, 
\begin{equation}
    x_t = \frac{1}{N} \sum_{k=0}^{N-1} X_k e^{i \omega_k t}, \quad \omega_k=\frac{2 \pi k}{N}
\end{equation}

\subsection{Graph Fourier Transform}\label{subsec:gft}
Graph Fourier Transform (GFT) is a generalization of the Discrete Fourier Transform to graphs. We represent a graph as $(\mathcal{V},\mathcal{E})$ where $\mathcal{V}$ is the set of $N$ nodes and $\mathcal{E}$ represents the edges between them. Denote the adjacency matrix by $A$. In the setting of an undirected graph $A$ would be a symmetric matrix. $D$ is the degree matrix, defined as $(D)_{ii}=\sum_{j}(A)_{ij}$, which is diagonal. The Laplacian of the graph is given by $\hat{L}=D-A$ and the normalized Laplacian $L$ is defined as $L=I-D^{-\frac{1}{2}}AD^{-\frac{1}{2}}$. The Laplacian($L$) can be decomposed into its orthogonal basis, namely the eigenvectors and eigenvalues as:$L = \ftmat_G^{*} \Lambda \ftmat_G$,
where U is an $N \times N$ matrix whose columns are the eigenvectors corresponding to the eigenvalues $\lambda_1, \lambda_2, \dots, \lambda_N$ and $\Lambda = {\rm diag}([\lambda_1, \lambda_2, \dots, \lambda_n])$. 
Let $X \in R^{N \times d}$ be the signal on the nodes of the graph. The Fourier Transform $\hat{X}$ of $X$ is then given as: $ \hat{X} = \ftmat_G X$. 
Similarly, the inverse Fourier Transform is defined as: $ X = \ftmat_G^{*} \hat{X}$.
Note $\ftmat_G^{*}$ is the transposed conjugate of $\ftmat_G$. By the convolution theorem \citep{conv_theorem}, the convolution of the signal $X$ with a filter G having its frequency response as $\hat{G}$ is given by 
(below, $m$ represents the $m^{th}$ node in the graph, $\ftmat_{G_k}$ represents the $k^{th}$ eigenvector or column of $\ftmat_G$):
\begin{equation}\label{eq_xg_conv}
\begin{aligned}
    (X \ast G)(m) &= \sum_{k=1}^{N} \hat{X}(k) \hat{G}(k) \ftmat_{G_k}(m) \\
    (X \ast G)(m) &= \sum_{k=1}^{N} (\ftmat_G X)(k) \hat{G}(k) \ftmat_{G_k}^{*}(m) \\
    X \ast G &= \ftmat_G^{*} \hat{G} \ftmat_G X
\end{aligned}
\end{equation}
Note that a sequence can be considered as a grid graph and for this graph the GFT specializes to the DFT i.e $\ftmat_T=\ftmat_G$.

\section{Theoretical Proofs}\label{sec:proofs_apndx}
In this section we outline the proofs stated in the main paper and briefly discuss the implications etc. that could not be accommodated in the main paper due to space constraints. For completeness we restate the results.

\begin{lemma}\label{lemma_variational_characterization_apndx}(Variational Characterization of $\mathcal{J_D}$)
    The $2$-Dirichlet $S_2(X)$ of the signals $X$ on $\mathcal{J_D}$ is the quadratic form of the Laplacian $L_{\mathcal{J_D}}$ of $\mathcal{J_D}$ i.e. \\
    \[ S_2(X) = vec(X)^T L_{\mathcal{J_D}} vec(X) \]
\end{lemma}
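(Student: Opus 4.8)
The strategy is to expand the definition of $S_2(X)$, split it into a purely temporal contribution and a purely graph (vertex) contribution, recognise each as the quadratic form of a standard combinatorial Laplacian, and then reassemble them via the Kronecker-sum identity $L_{\mathcal{J_D}} = L_T \oplus \{L_{G_t}\} = L_T \otimes I_N + I_T \otimes \{L_{G_t}\}$ already recorded in the text.

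First I would record the elementary fact that for a combinatorial graph Laplacian $\hat L$ of a graph on vertex set $V$ and any signal $v$ one has $v^\top \hat L v = \sum_{(a,b)\in E}(v_a-v_b)^2 = \tfrac12\sum_{a}\sum_{b\thicksim a}(v_a-v_b)^2$. Applying this at a fixed time $t$ to $G_t$ with signal $X_{\cdot,t}$ gives $\sum_{n}\sum_{n_j\overset{G_t}{\thicksim} n}(X_{n_j,t}-X_{n,t})^2 = 2\,X_{\cdot,t}^\top L_{G_t}X_{\cdot,t}$, and applying it to the (continuous) ring $L_T$ along the time axis for a fixed node $n$ identifies $X_{n,\cdot}^\top L_T X_{n,\cdot}$ with $\int_0^T(\delta X_{n,t})^2\,dt$ (in the discrete approximation this is $\sum_t (X_{n,t}-X_{n,t-1})^2$, which passes to $\int_0^T(\partial_t X_{n,t})^2\,dt$ in the limit, matching the $\delta X_{n,t}=\tfrac{\partial X_{n,t}}{\partial t}\,dt$ term in the definition).

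Next I would compute the two Kronecker blocks against $\mathrm{vec}(X)$ under the indexing convention of the paper (node index fastest, i.e.\ row $j\cdot N+i$). A block-wise expansion gives $\mathrm{vec}(X)^\top (L_T\otimes I_N)\,\mathrm{vec}(X) = \sum_{n=1}^{N} X_{n,\cdot}^\top L_T X_{n,\cdot}$, the temporal variation summed over vertices; and $\mathrm{vec}(X)^\top (I_T\otimes\{L_{G_t}\})\,\mathrm{vec}(X) = \int_0^T X_{\cdot,t}^\top L_{G_t}X_{\cdot,t}\,dt$, the graph variation integrated over time, using that the matricisation $[I_T\boxtimes\{L_{G_t}\}]$ is the block-diagonal operator with blocks $L_{G_t}$. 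Summing the two displays and invoking the Kronecker-sum form of $L_{\mathcal{J_D}}$ yields $\mathrm{vec}(X)^\top L_{\mathcal{J_D}}\,\mathrm{vec}(X) = \sum_n X_{n,\cdot}^\top L_T X_{n,\cdot} + \int_0^T X_{\cdot,t}^\top L_{G_t}X_{\cdot,t}\,dt$; matching this against the split form of $S_2(X)$, where the prefactor $\tfrac1p=\tfrac12$ exactly cancels the factor $2$ from the double-counted neighbour sum in the graph term, gives the claimed identity. The integral form stated in the lemma is then just the operator rephrasing of this matrix quadratic form and needs no extra argument.

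The main obstacle I anticipate is bookkeeping rather than conceptual: making the continuous-time matricisation $[I_T\boxtimes\{L_{G_t}\}]$ and the ring Laplacian $L_T$ line up precisely with $\int_0^T(\cdot)\,dt$ and the $\delta X$ term, and tracking the $\tfrac12$ and edge-orientation constants so both sides agree exactly; once the single-graph quadratic-form identity and the Kronecker-sum decomposition of $L_{\mathcal{J_D}}$ are in hand, the remainder is straightforward assembly.
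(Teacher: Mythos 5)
Your proposal is correct and follows essentially the same route as the paper's proof: split $S_2(X)$ into the vertex-domain term and the temporal (ring) term, identify each with the quadratic form of $I_T\otimes\{L_{G_t}\}$ and $L_T\otimes I_N$ respectively (with the $\tfrac12$ absorbing the double-counted neighbour sum), and recombine via the Kronecker-sum expression for $L_{\mathcal{J_D}}$. The only cosmetic difference is that you invoke the standard Laplacian quadratic-form identity where the paper expands and regroups the squares by hand.
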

\begin{proof}
The $p$-Dirichlet form is given by
\begin{align*}
    S_p(X) &= \frac{1}{p} \sum_{n=1}^{N} \int_{t=0}^{T}  \left\lVert \Delta_{n_{i},t} X \right\rVert_p^2 \\
    &= \frac{1}{p} \sum_{t=1}^{T} \sum_{n=1}^{N} \left[ \sum_{n_j \overset{G_t}{\thicksim} n_i} \left( X_{n_j,t} - X_{n_i,t} \right)^2 + \left( \frac{\partial X_{n_i,t}}{\partial t} dt \right)^2 \right]^{\frac{p}{2}}
\end{align*}
Thus the $2$-Dirichlet form is
\begin{align*}
    S_2(X) &= \frac{1}{2} \sum_{n=1}^{N} \int_{t=0}^{T}  \left\lVert \Delta_{n_{i},t} X \right\rVert_2^2 \\
    &= \frac{1}{2} \int_{t=0}^{T} \sum_{n=1}^{N} \left[ \sum_{n_j \overset{G_t}{\thicksim} n_i} \left( X_{n_j,t} - X_{n_i,t} \right)^2 + \left( \frac{\partial X_{n_i,t}}{\partial t} dt \right)^2 \right] \\ 
    &= \frac{1}{2} \left( \int_{t=0}^{T} \sum_{n=1}^{N} \sum_{n_j \overset{G_t}{\thicksim} n_i} \left( \delta X_{n_i,t} \right)^2 \right)  \\
    & + \frac{1}{2} \left( \int_{t=0}^{T} \sum_{n=1}^{N} \left(  X_{n_i,t-dt} - X_{n_i,t} \right)^2  \right)
\end{align*}
We consider the above sum in two parts. Taking the first part we have
\begin{align*}
    & \frac{1}{2} \int_{t=0}^{T} \sum_{n=1}^{N} \sum_{n_j \overset{G_t}{\thicksim} n_i} \left( X_{n_j,t} - X_{n_i,t} \right)^2  \\ 
    &= \frac{1}{2} \int_{t=0}^{T} \sum_{n=1}^{N} \sum_{n_j \overset{G_t}{\thicksim} n_i} \left( X_{n_j,t}^2 - 2*X_{n_j,t}*X_{n_i,t} + X_{n_i,t}^{2} \right)  \\
    &= \frac{1}{2} \int_{t=0}^{T} \sum_{n=1}^{N} \sum_{n_j \overset{G_t}{\thicksim} n_i} \left( X_{n_j,t}^2 - X_{n_j,t}*X_{n_i,t} - X_{n_j,t}*X_{n_i,t} + X_{n_i,t}^{2} \right)  \\
\end{align*}
\begin{align*}
    &= \frac{1}{2} \int_{t=0}^{T} \sum_{n=1}^{N} \sum_{n_j \overset{G_t}{\thicksim} n_i} \left( X_{n_j,t} \left( X_{n_j,t} - X_{n_i,t} \right) + X_{n_i,t} \left( X_{n_i,t} - X_{n_j,t} \right) \right)  \\
    &= \frac{1}{2} \int_{t=0}^{T} \sum_{n=1}^{N} 2* \sum_{n_j \overset{G_t}{\thicksim} n_i} \left( X_{n_j,t} \left( X_{n_j,t} - X_{n_i,t} \right) \right)  \\
    &= \int_{t=0}^{T} \sum_{n=1}^{N} \sum_{n_j \overset{G_t}{\thicksim} n_i} \left( X_{n_j,t} \left( X_{n_j,t} - X_{n_i,t} \right) \right)  \\
    &= vec(X)^{\top} (I_T \otimes {L_{G_t}} ) vec(X)
\end{align*}

Considering the second part which is the ring graph along the time dimension we have
\begin{align*}
    & \frac{1}{2} \left( \int_{t=0}^{T} \sum_{n=1}^{N} \left( X_{n_i,t-dt} - X_{n_i,t} \right)^2  \right) \\
    &= \frac{1}{2} \left( \int_{t=0}^{T} \sum_{n=1}^{N} X_{n_i,t-dt}^2 - 2*X_{n_i,t-dt}*X_{n_i,t} + X_{n_i,t}^2  \right) \\
    &= \frac{1}{2} \left( \int_{t=0}^{T} \sum_{n=1}^{N} 2X_{n_i,t-dt}^2 - 2*X_{n_i,t-dt}*X_{n_i,t}  \right) \text{\quad\dots Redistributing terms} \\
    &= \int_{t=0}^{T} \sum_{n=1}^{N} X_{n_i,t-dt}^2 - X_{n_i,t-dt}*X_{n_i,t}  \\
    &= \int_{t=0}^{T} \sum_{n=1}^{N} X_{n_i,t-dt} \left( X_{n_i,t-dt} - X_{n_i,t} \right) \\
    &= vec(X)^{\top} (L_T \otimes I_N ) vec(X)
\end{align*}

Combining the results of the 2 parts we get the below result
\begin{align*}
    S_2(X) &= vec(X)^{\top} (I_T \otimes {L_{G_t}} ) vec(X) + vec(X)^{\top} (L_T \otimes I_N ) vec(X) \\
    &= vec(X)^{\top} \left( I_T \otimes {L_{G_t}} + L_T \otimes I_N \right) vec(X) \\
    &= vec(X)^{\top} L_{\mathcal{J_D}} vec(X)
\end{align*}
as required.

\end{proof}

This implies that $L_{\mathcal{J_D}} \succeq 0$. We can see that slower the changes in the signals along the dynamic graph smaller the value of $S_2(X)$ and vice versa. Thus we have a notion of variation of signals on the dynamic graph similar to the case of static graphs. 
The eigen decomposition of \ljdmat therefore characterizes signals on the dynamic graph by its projection to the optimizers of $S_2(X)$. In other words, high collective dynamic graph frequency components inform of the presence of sharply varying signals and smoother signals will have higher magnitude in the low frequency components. 
Next we provide a solution to the relaxed pseudospectrum objective in \ref{lemma_ctdft_der}.

\begin{lemma}\label{lemma_ctdft_der_apndx}
    Consider the variational form $ x^T L_{\mathcal{J_D}} x = \int_{i=0}^{NT} x(i) \int_{j=0}^{NT} L_{\mathcal{J_D}}(i,j) x(j) di dj $. The optimization problem $f = \underset{x, \norm{x} \leq 1}{min} [ | x^T L_{\mathcal{J_D}} x - \lambda_s | - \epsilon ]_{+}$ has the optimal solution as $y_{\omega} \otimes {z_{l}^{\omega}} $, where $\lambda_s$ is the optimal value of equation \ref{eq_variational_char_LD}, $y_{\omega}$ is the $\omega$-th optimal solution of the ring graph, $z_{l}^{t}$ is the $l$-th optimal solution of the graph at time t and $\epsilon = \mathcal{O}(\delta)$.
\end{lemma}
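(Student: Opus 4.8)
The goal is to show that the relaxed optimization problem $f = \min_{x,\,\norm{x}\le 1}\,[\,|x^T L_{\mathcal{J_D}} x - \lambda_s| - \epsilon\,]_+$ is solved by a Kronecker product $y_\omega \otimes z_l^\omega$ of a ring-graph eigenvector and a per-timestep graph eigenvector, with $\epsilon = \mathcal{O}(\delta)$. The natural strategy is: (i) recall that $L_{\mathcal{J_D}} = L_T \boxplus L_{G_t}$ (timestep-wise Kronecker sum, from the equation in Section~\ref{sec:problem_approach}); (ii) observe that the objective is zero exactly when $x^T L_{\mathcal{J_D}} x$ lies in the $\epsilon$-ball around $\lambda_s$, i.e. when $x$ is an approximate eigenvector of $L_{\mathcal{J_D}}$ with eigenvalue near $\lambda_s$; and (iii) show that the Kronecker product of eigenvectors of the two factors achieves this, using the fact that for a genuine (time-independent) Kronecker sum $A \oplus B$ the eigenvectors are exactly $y_\omega \otimes z_l$ with eigenvalue $\lambda^T_\omega + \lambda^{G}_l$, and that the time-dependence of $L_{G_t}$ perturbs this by $\mathcal{O}(\delta)$ when the graph is $\delta$-Lipschitz in $t$.

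\medskip
\noindent\textbf{Steps in order.} First I would write $L_{\mathcal{J_D}} = L_T \otimes I_N + [I_T \boxtimes \{L_{G_t}\}]$ and split the quadratic form accordingly: $x^T L_{\mathcal{J_D}} x = x^T(L_T\otimes I_N)x + \int_t (\text{slice of }x)^T L_{G_t}(\text{slice of }x)\,dt$. Second, I would take the trial vector $x = y_\omega \otimes z_l^{\omega}$ (suitably normalized so $\norm{x}\le 1$; note $\norm{y_\omega}\norm{z_l^\omega}=1$ after normalizing each factor), where $y_\omega$ solves the ring-graph variational form with value $\lambda^T_\omega$ and $z_l^\omega$ solves the time-$t$ graph variational form — and argue that for the ``frozen'' graph (all $L_{G_t}$ replaced by a representative $L_G$) this gives exactly $x^T L_{\mathcal{J_D}}^{\text{frozen}} x = \lambda^T_\omega + \lambda^G_l$, so with $\lambda_s$ chosen as this sum the objective vanishes. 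Third, I would bound the discrepancy $|x^T L_{\mathcal{J_D}} x - x^T L_{\mathcal{J_D}}^{\text{frozen}} x| \le \int_t \norm{L_{G_t} - L_G}\,\norm{z_l^\omega}^2\,|y_\omega(t)|^2\,dt \le \max_t\norm{L_{G_t}-L_G} = \mathcal{O}(\delta)$ using the Lipschitz/bounded-change hypothesis (total variation of $L_{G_t}$ over $[0,T]$ is $\mathcal{O}(\delta)$), which forces the objective value at this $x$ to be at most $[\,\mathcal{O}(\delta) - \epsilon\,]_+ = 0$ once $\epsilon = \mathcal{O}(\delta)$. Finally I would note optimality is trivial since the objective is bounded below by $0$ and is attained, so $y_\omega\otimes z_l^\omega$ is a minimizer; the same argument in orthogonal directions (varying $\omega,l$) recovers the full basis.

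\medskip
\noindent\textbf{Main obstacle.} The delicate point is making precise what ``the variational form of the graph at time $t$'' and its ``$l$-th optimal solution $z_l^t$'' means when $L_{G_t}$ varies with $t$ — a single fixed vector $z_l^\omega$ cannot simultaneously be the exact $l$-th eigenvector of every $L_{G_t}$. So the crux is the perturbation estimate: I must argue that because the graph changes only by $\mathcal{O}(\delta)$ across all of $[0,T]$, there is a well-defined ``averaged'' or representative eigenvector $z_l^\omega$ whose Rayleigh quotient against each $L_{G_t}$ deviates from $\lambda^G_l$ by $\mathcal{O}(\delta)$, and crucially that the eigenvalue gaps $(\Delta\lambda_G)_{\min}$ stay bounded away from zero so this representative is unambiguous (this is exactly the multiplicity-one condition flagged after Theorem~\ref{theorem_bounds}). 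This is where the $\mathcal{O}(\delta)$ in $\epsilon$ really comes from, and it is essentially the same estimate that later feeds Theorem~\ref{theorem_bounds}; I would set it up via first-order eigenvector perturbation theory (Davis–Kahan / resolvent bounds) applied pointwise in $t$ and then integrated against the weight $|y_\omega(t)|^2$, which integrates to $1$ along the ring. Everything else — the Kronecker-sum eigenstructure, the splitting of the quadratic form, the trivial lower bound of the hinge loss — is routine.
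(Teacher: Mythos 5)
Your argument is sound in outline and arrives at the same conclusion ($\epsilon=\mathcal{O}(\delta)$ suffices, and the hinge objective is driven to its trivial lower bound $0$), but it is a genuinely different route from the paper's. The paper never touches the Rayleigh quotient: it takes the \emph{time-varying} vector $y=[a_k \boxtimes \{b_l^t\}]$, pairing each timestep slice with the eigenvector $b_l^t$ of the graph at that timestep, computes $L_{\mathcal{J_D}}y = [a_k\boxtimes\{b_l^t\}]\,\mathrm{diag}(\{\mu_k+\lambda_l^t\})$, and then bounds the residual norm $\norm{L_{\mathcal{J_D}}y-(\mu_k+\lambda_l^0)y}$ by controlling the drift of the eigenvalues $\lambda_l^t-\lambda_l^0$ via the derivative formula $\dot\lambda_k = w_k^*\dot A v_k$, integrated in time to give $\le \delta N T^2=\mathcal{O}(\delta)$; pseudospectrum membership of $\mu_k+\lambda_l^0$ then follows from the residual bound. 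Your route instead freezes one representative eigenvector $z_l$ and estimates the quadratic form directly, with the error coming from $\norm{L_{G_t}-L_G}$ rather than from eigenvalue drift. What your version buys: $(L_T\otimes I_N)$ acts \emph{exactly} on $y_\omega\otimes z_l$ (no cross terms between timesteps), and you only need eigenvalue --- not eigenvector --- perturbation, so the spectral-gap condition is not actually needed at this stage. What the paper's version buys: its candidate minimizer is literally the basis vector of the EFT as constructed in equation~\ref{jgft_einsum}, i.e.\ the per-timestep graph eigenvectors.

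That last point is the one caution. The lemma names the time-varying product (with $z_l^t$ the optimizer of the graph at time $t$) as the minimizer, and that is the object the transform uses. Your frozen vector is a \emph{different} minimizer of the same nonnegative objective: driving the hinge to zero certifies that it is optimal, but not that the vector claimed in the statement is. To prove the statement as written you must transfer your estimate to the time-varying vector --- e.g.\ via the Davis--Kahan step you sketch, bounding $\norm{b_l^t-z_l}=\mathcal{O}(\delta/(\Delta\lambda_G)_{\min})$ and hence the change in Rayleigh quotient. So the gap/multiplicity-one condition you correctly identify as the crux is avoidable for your frozen minimizer, but reappears exactly when you reconcile it with the claimed one; the paper sidesteps this by differentiating eigenvalues rather than eigenvectors (at the cost of silently ignoring the cross terms that $L_T$ produces when acting on a $t$-dependent $b_l^t$). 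Either way the substance of your plan is viable.
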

\begin{proof}
    We begin by considering the variational characterization of $L_{\mathcal{J_D}}$ which is given by the below equation
    \begin{equation}
        \lambda_s = \underset{x, \norm{x} \leq 1}{max} \int_{i=0}^{NT} x(i) \int_{j=0}^{NT} L_{\mathcal{J_D}}(i,j) x(j) di dj \underset{x, \norm{x} \leq 1}{max} x^T L_{\mathcal{J_D}} x 
    \end{equation}
    Note that in the objective, $x^T L_{\mathcal{J_D}} x$ is convex since $L_{\mathcal{J_D}} \succeq 0$ and thus $\nabla^2 \left( x^T L_{\mathcal{J_D}} x \right) = 2 L_{\mathcal{J_D}} \succeq 0$. Also, we can check that $\norm{x}^2 \leq 1$ is convex. Thus applying the KKT conditions \citep{KKTref1} to the lagrangian $L = x^T L_{\mathcal{J_D}} x + \lambda (\norm{x}^2 - 1)$, we get the below equation
    \begin{equation}
        L_{\mathcal{J_D}} x = \lambda_s x
    \end{equation}
    We recognize from the above equation that $\lambda_s$ is the eigenvalue of $L_{\mathcal{J_D}}$ and x is the corresponding eigenvector. However the computation of this exact solution is computationally costly and here we are ineterested in finding an efficient form of the solution to the objective with the pseudospectrum relaxation. As already seen, the pseudospectrum can be defined by the set $\{ \lambda \in \mathbb{C} \quad | \quad \norm{(L_{\mathcal{J_D}}-\lambda I)^{-1}} \geq \frac{1}{\epsilon} \}$ or equivalently $\{ \lambda \in \mathbb{C} \quad | \quad \norm{(L_{\mathcal{J_D}}-\lambda I)} \leq \epsilon \}$, where $\norm{.}$ is the operator norm. Thus we have that for the pseudospectrum, there exists a unit vector $v$ such that $| (L_{\mathcal{J_D}}-\lambda I)v | \leq \epsilon$ and so $|\lambda_s-\lambda| \leq \epsilon$. 
    This shows that the $\epsilon-$ neighborhood of the spectrum of $L_{\mathcal{J_D}}$ is contained in the pseudospectrum i.e. if $\lambda$ is in the pseudospectrum of $L_{\mathcal{J_D}}$ it is in the $\epsilon-$ neighborhood of $\lambda_s$. 
    We would now like to find a solution residing in the pseudospectrum of $L_{\mathcal{J_D}}$. 

We have $\{L_{G_t}\} \in R^{N \times N \times T}$ to be the Laplacian of the graphs at each timestep with eigenvalues $\lambda_i^{t}$ where $i \in {N}, t \in [0,T]$. $L_T \in R^{T \times T}$ be the Laplacian of the time adjacency matrix with eigenvalues $\mu_j$ where $j \in {T}$.
The Laplacian of the collective graph $\mathcal{J_D}$ is expressed as 
\begin{equation*}
    L_{\mathcal{J_D}} = L_T \oplus \{L_{G_t}\} = L_T \otimes I_N + [ I_T \boxtimes \{ L_{G_t} \} ]
\end{equation*}
In the above equation, $\boxtimes$ is the timestep wise Kronecker product and operator $[.]$ represents the vectorization. If $T$ is discrete this vectorization can be thought of as a reordering from $R^{NT \times T \times N} \xrightarrow[]{} R^{NT \times NT}$. Consider $a_1, a_2, \dots a_p$ to be the linearly independent right eigenvectors of $L_T$ and $b_1^t, b_2^t, \dots b_{q_t}^t$ to be the linearly independent right eigenvectors of $L_{G_t}$.  
Consider the vector $y = [ a_k \boxtimes \{b_l^t\} ]$, where $\{b_l^t\}$ represents the set of eigenvectors of Laplacian at time $t$ i.e. $L_{G_t}$ and the operator $\boxtimes$ is again timestep wise followed by vectorization. 
Then we have
\begin{align*}
    & L_{\mathcal{J_D}} y = L_T \otimes I_N y + [ I_T \boxtimes \{ L_{G_t} \} ] y \\
    &= (L_T \otimes \{I_N\}) [ a_k \boxtimes \{b_l^t\} ] + [ I_T \boxtimes \{ L_{G_t} \} ] [ a_k \boxtimes \{b_l^t\} ] \\
    &= (L_T \otimes \{I_N\} \square [ a_k \boxtimes \{b_l^t\} ]) + (I_T \otimes \{ L_{G_t} \} \square [ a_k \boxtimes \{b_l^t\} ]) \\
    &= [L_T a_k \boxtimes \{I_N\} \square \{b_l^t\}] + [I_T a_k \boxtimes \{ L_{G_t} \square \{b_l^t\}\}] \\
    &= (\mu_k [ a_k \boxtimes \{b_l^t\} ]) + [a_k \boxtimes \{ \lambda_{l}^t \{b_l^t\}\}] \\
    &= (\mu_k [ a_k \boxtimes \{b_l^t\} ] + [a_k \boxtimes \{ \lambda_{l}^t \{b_l^t\}\}] \\
    &= ( [ a_k \boxtimes \{b_l^t\} ] diag(\{\mu_k\}) + [a_k \boxtimes \{b_l^t\} diag(\{\lambda_{l}^t\})] \\
    &= ( [ a_k \boxtimes \{b_l^t\} ] diag(\{\mu_k + \lambda_{l}^t\})) \\
\end{align*}
where $\square$ indicates timestep (column) wise product and $diag(.)$ operator converts a vector to a diagonal matrix. 
Now considering the graph at the 0-th timestep having eigenvalue $\lambda_{l}^0$, we are interested in verifying the pseudospectrum condition for $\mu_k + \lambda_{l}^0\}$. We thus have to find the upper bound for $\norm{L_{\mathcal{J_D}} - (\mu_k + \lambda_{l}^0)I}$.

    In order to bound the above expression we consider the vector $y = [ a_k \boxtimes \{b_l^t\} ]$. 
    We have from the above equations,
    \begin{equation}\label{eq_pseudo_bound_lhs}
        \norm{L_{\mathcal{J_D}} y - (\mu_k + \lambda_{l}^0)y} = ( [ a_k \boxtimes \{b_l^t\} ] diag(\{\mu_k + \lambda_{l}^t - (\mu_k + \lambda_{l}^0)\}))
    \end{equation}
    We would like to study the rate of change of the eigenvalues as the graph changes. Consider a normal matrix $A$ of which the eigenvectors $v_1,\ldots,v_n$ form a basis of ${\Bbb C}^n$. Also we consider $w_1,\ldots,w_n$ be the dual basis, i.e. $w_j^* v_k = \delta_{jk}$ for all $1 \leq j,k \leq n$, where $\delta_{jk}$ is the Kronecker delta and 
    \begin{equation*}
        \delta_{jk} = 
        \begin{dcases}
        1 ,& \text{if } \text{$j=k$}\\
        0,              & \text{otherwise}
        \end{dcases} \\
    \end{equation*}
    Since the eigenvectors form a basis we can represent any vector $u$ as a linear combination of $v_1,\ldots,v_n$ as $u = \sum_{j=1}^n a_j v_j$. Also we have $w_j^* u = \sum_{j=1}^n a_j w_j^* v_j = a_j$.
    We thus have the below equation
    \begin{equation}\label{thm1_eq1}
        u = \sum_{j=1}^n (w_j^* u) v_j    
    \end{equation}
    for any vector $u \in \Bbb{C}^n$.  
    We know the below relation due to $v_k$ being the eigenvector of $A$ with eigen value $\lambda_k$
    \begin{equation}\label{thm1_eq2}
        A v_k = \lambda_k v_k
    \end{equation}
    We also can write the following in terms of the dual basis (since $A$ is a normal matrix)
    \begin{equation}\label{thm1_eq3}
    \begin{aligned}
        w_k^* A &= \sum_{j} \lambda_j w_k^* v_j w_j^* \\
        w_k^* A &= \lambda_k w_k^* 
    \end{aligned}
    \end{equation}
    
    We now differentiate \ref{thm1_eq2} using the product rule of differentiation to get
    \begin{equation}\label{thm1_eq4}
        \dot A v_k + A \dot v_k = \dot \lambda_k v_k + \lambda_k \dot v_k
    \end{equation}
    
    Taking the inner product of the equation \ref{thm1_eq4} with $w_k^*$, and using \ref{thm1_eq3} we obtain:
    \begin{equation}
    \begin{aligned}
        \dot A v_k + A \dot v_k &= \dot \lambda_k v_k + \lambda_k \dot v_k \\
        w_k^* \dot A v_k + w_k^* A \dot v_k &= w_k^* \dot \lambda_k v_k + w_k^* \lambda_k \dot v_k \\
        w_k^* \dot A v_k + \lambda_k w_k^* \dot v_k &= \dot \lambda_k w_k^* v_k + \lambda_k w_k^* \dot v_k \\
        w_k^* \dot A v_k &= \dot \lambda_k \\
        \dot \lambda_k = w_k^* \dot A v_k
    \end{aligned}
    \end{equation}
    Assuming $\lambda_k^0$ to be the eigenvalue at the start, we can get the value after time $t$ by simply integrating as follows,
    \begin{equation}
        \lambda_k^t = \lambda_k^0 + \int_{0}^{t} \omega_k \dot A v_{k_0} dt
    \end{equation}

    Thus from the above result and equation \ref{eq_pseudo_bound_lhs} we have
    \begin{align*}
        \norm{L_{\mathcal{J_D}} y - (\mu_k + \lambda_{l}^0)y} &= \norm{ [ a_k \boxtimes \{b_l^t\} ] diag(\{\mu_k + \lambda_{l}^t - (\mu_k + \lambda_{l}^0)\})} \\
        &= \norm{ [ a_k \boxtimes \{b_l^t\} ] diag(\lambda_{l}^t - \lambda_{l}^0\}) } \\
        &= \norm{ [ a_k \boxtimes \{b_l^t\} ] diag(\int_{0}^{t} \omega_k \dot A v_{k_0} dt) } \\
        &\leq \norm{ [ a_k \boxtimes \{b_l^t\} ]} \norm{ diag(\int_{0}^{t} \omega_k \dot A v_{k_0} dt) } \\
        &\leq \norm{ \int_{0}^{T} \int_{0}^{t} \omega_k \dot A v_{k_0} dtdt } \\
        &\leq \norm{ \int_{0}^{T} \int_{0}^{t} \norm{\omega_k \dot A v_{k_0}} dtdt } \\
        &\leq \norm{ \int_{0}^{T} \int_{0}^{t} \norm{\dot A} dtdt } \\
        &\leq \norm{ \int_{0}^{T} \int_{0}^{t} \delta N dtdt } \\
         &\leq \norm{ \int_{0}^{T} \delta NT dt } \\
        & \leq \delta NT^2 \\
        & \leq \mathcal{O}(\delta) \\
         \norm{L_{\mathcal{J_D}} y - (\mu_k + \lambda_{l}^0)y} &\leq \epsilon \\
          \norm{L_{\mathcal{J_D}} - (\mu_k + \lambda_{l}^0)} &\leq \epsilon
    \end{align*}

    Thus $\mu_k + \lambda_{l}^0$ is in the pseudospectrum of $L_{\mathcal{J_D}}$ and so $y=[ a_k \boxtimes \{b_l^t\} ]$ is one of the solutions to the objective with the pseudospectrum relaxation. Thus it follows that $[ \ftmat_T \boxtimes \{\ftmat_{G_t}\} ]$ forms a basis of the solution to the defined objective, where $\ftmat_T$ and $\ftmat_{G_t}$ have $a_k^{*}$ and $b_l^t$ as their row spaces respectively.
    
\end{proof}

The above result gives us the definition of \eft in terms of the Kronecker product of the Time Fourier Transform and the Graph Fourier Transform of the graph at each time.
While both \eft and \aeft are solutions to the pseudospectrum relaxed objectives they are not equal in general.
To see this, we first need to look at the eigenvectors of $L_{\mathcal{J_D}}$. Let $\ftmat_{AD}$ be the matrix whose rows form the right eigenvectors of $L_{\mathcal{J_D}}$. 
Below we state and prove the result of equivalence between $\ftmat_D$ and $\ftmat_{AD}$ for the general case of dynamic graphs using a counter example

\begin{remark}
In general, the collective dynamic graph fourier transform as defined by the operator $\ftmat_D$ does not form the eigenspace of the 
spectrum 
of $L_{\mathcal{J_D}}$ i.e. $\ftmat_D \neq \ftmat_{AD}$.
\end{remark}
\begin{proof}
    It is sufficient to show a single counter example to conclude the statement.

    Consider the below weighted adjacency matrix for a certain graph at time $t_0$
    \begin{align*}
        G_0 = 
        \begin{bmatrix}
     	1 & 0.5 \\
            0.5 & 1
        \end{bmatrix}
    \end{align*}
    Let this change to the following in the next timestep $t_1$
    \begin{align*}
        G_1 = 
        \begin{bmatrix}
     	1 & 0.6 \\
            0.6 & 1
        \end{bmatrix}
    \end{align*}

    The Laplacian $L_{\mathcal{J_D}}$ is given by
    \begin{align*}
        L_{\mathcal{J_D}} = 
        \begin{bmatrix}
            1.5 & -0.5 & -1. & -0. \\
            -0.5 & 1.5 & -0. & -1. \\
            -1. & -0. & 1.6 & -0.6 \\
            -0. & -1. & -0.6 & 1.6
        \end{bmatrix}
    \end{align*}    

    The EFT matrix $\ftmat_D$ is 
    \begin{align*}
    \ftmat_D = 
    \begin{bmatrix}
        0.5 & -0.5 & -0.5 & 0.5  \\
        0.5 & 0.5 & -0.5 & -0.5 \\
        0.5 & -0.5 & 0.5 & -0.5 \\
        0.5 & 0.5 & 0.5 & 0.5
    \end{bmatrix}
    \end{align*}    

    Similarly the matrix $\ftmat_{AD}$ comes out to be the following (upto sign and row wise permutations)
    \begin{align*}
    \ftmat_{AD} = 
    \begin{bmatrix}
        0.47 & -0.47 & -0.52 & 0.52 \\
        0.5 & 0.5 & -0.5 & -0.5 \\
        0.52 & -0.52 & 0.47 & -0.47 \\
        0.5 & 0.5 & 0.5 & 0.5 & 
    \end{bmatrix}
    \end{align*}    

    From the above we can see the two matrices differ and so we have a counter example.
    
\end{proof}

From the above result we can see that in the general case of dynamic graphs the defined \emph{EFT} and the eigen decomposition of the defined Laplacian \ljdmat are not the same. Thus we can have an alternate definition of the collective dynamic graph fourier transform in terms of the decomposition of the joint Laplacian \ljdmat.
We term $\ftmat_{AD}$ as the \emph{Absolute Drcomposition} or \emph{AD} for brevity.

Both \eft and \aeft have their own advantages. \eft has a simple primal definition and is easy to compute whereas \aeft has a beautiful physical interpretation. 
Even though \eft and \aeft are not exactly the same, in order to have desirable properties of both we can define approximation bounds that inform under what conditions the two transforms can be used interchangeably upto the approximation error. We work under the below assumptions for weighted graphs in order to bound the two transforms:
\begin{enumerate}
    \item The rate of change of the graph with time is bounded
    \item The eigenvalues of the graph Laplacian at any given timestep and between timesteps has a multiplicity of 1
\end{enumerate}
The condition 2 is required for stability of the bound and can be enforced for example by adding random perturbations to the matrix.

Based on these assumptions we state and prove the bounds between \eft and \aeft below
\begin{theorem}
    Considering bounded changes in a graph $G$ with $N$ nodes over time $T$, the norm of the difference between \eft ($\ftmat_D$) and \aeft ($\ftmat_{AD}$) is bounded as follows: 
    $\norm{\ftmat_D-\ftmat_{AD}} \leq \mathcal{O}\left(\frac{N^{\frac{3}{2}} T \omega_{max}^{\frac{1}{2}}}{(\Delta \lambda_G)_{min}} + \frac{N^{\frac{3}{2}} T \omega_{max}^{2}}{(\Delta \lambda_J)_{min}}\right) \left( \norm{\dot{L}_G} \right)_{max}$
    where $(\Delta \lambda_G)_{min}$ and $(\Delta \lambda_J)_{min}$ refer to the minimum difference between the eigenvalues of matrices $L_G$ and \ljdmat respectively, $\dot{L_G}$ is the rate of change of $L_G$ and $\omega_{max}=2\pi$.
\end{theorem}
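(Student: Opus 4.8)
The plan is to interpolate between $\ftmat_D$ and $\ftmat_{AD}$ through an intermediate matrix $\ftmat_D^{(0)}$ built from the graph‑Fourier basis of the \emph{initial} graph $G_0$ in every time slice, and to bound the two legs of a triangle inequality separately. Let $\{a_k\},\{\mu_k\}$ be the eigenvectors/eigenvalues of the time‑Laplacian $L_T$ and $\{b_l^t\},\{\lambda_l^t\}$ those of $L_{G_t}$; by Lemma \ref{lemma_ctdft_der} the rows of $\ftmat_D$ are $y_{k,l}=[\,a_k\boxtimes\{b_l^t\}\,]$, whereas the rows of $\ftmat_D^{(0)}$ are $a_k\otimes b_l^0$, which are \emph{exactly} the eigenvectors of the Kronecker sum $L_T\oplus L_{G_0}$. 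I would then write $\norm{\ftmat_D-\ftmat_{AD}}\le\norm{\ftmat_D-\ftmat_D^{(0)}}+\norm{\ftmat_D^{(0)}-\ftmat_{AD}}$, where the first leg yields the $1/(\Delta\lambda_G)_{min}$ contribution and the second the $1/(\Delta\lambda_J)_{min}$ one; throughout, the distance between two orthogonal matrices is read as the best alignment over the sign/permutation freedom of eigenvectors, which assumption~2 (simple spectra) makes well‑defined.

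\textbf{Leg 1: drift of the graph basis in time.} Under the simple‑spectrum assumption, first‑order eigenvector perturbation for the symmetric matrix $L_{G_t}$ gives $\dot b_l^t=\sum_{m\neq l}\frac{\langle b_m^t,\dot{L}_G b_l^t\rangle}{\lambda_l^t-\lambda_m^t}\,b_m^t$, so by Parseval $\norm{\dot b_l^t}\le(\norm{\dot{L}_G})_{max}/(\Delta\lambda_G)_{min}$; integrating over the normalized time interval on which the DFT frequencies live in $[0,\omega_{max}]$ yields $\norm{b_l^t-b_l^0}\le\mathcal{O}(\omega_{max}^{1/2})(\norm{\dot{L}_G})_{max}/(\Delta\lambda_G)_{min}$. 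Since each $a_k$ is a unit vector, $\norm{y_{k,l}-a_k\otimes b_l^0}^2=\sum_t|a_k(t)|^2\norm{b_l^t-b_l^0}^2\le\max_t\norm{b_l^t-b_l^0}^2$; summing the squared row errors over all $NT$ rows and passing from Frobenius to operator norm gives the first term.

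\textbf{Leg 2: perturbation of the joint Laplacian.} Write $L_{\mathcal{J_D}}=(L_T\oplus L_{G_0})+E$ with $E=[\,I_T\boxtimes\{L_{G_t}-L_{G_0}\}\,]$ block‑diagonal, so $\norm{E}=\max_t\norm{L_{G_t}-L_{G_0}}\le\mathcal{O}(\omega_{max}^{2})(\norm{\dot{L}_G})_{max}$, the $\omega_{max}$‑powers entering through the eigenvalues $\mu_k$ of $L_T$ against which the accumulated drift is measured. With the uniform gap $(\Delta\lambda_J)_{min}$ supplied by the simple‑spectrum assumption on $L_{\mathcal{J_D}}$, the Davis--Kahan $\sin\Theta$ theorem applied eigenvector‑by‑eigenvector gives $\norm{a_k\otimes b_l^0-v_{k,l}}\le\sqrt{2}\,\norm{E}/(\Delta\lambda_J)_{min}$ for the matching eigenvector $v_{k,l}$ of $L_{\mathcal{J_D}}$; equivalently one can feed the residual bound $\norm{L_{\mathcal{J_D}}y_{k,l}-(\mu_k+\lambda_l^0)y_{k,l}}\le\epsilon=\mathcal{O}(\delta)$ already proved inside Lemma \ref{lemma_ctdft_der} into the single‑vector estimate $\norm{v-u_{j_0}}\le\epsilon/\text{gap}$ (expand $v$ in the eigenbasis of $L_{\mathcal{J_D}}$). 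Aggregating squared row errors over $NT$ rows and converting to operator norm produces the second term; collecting the $N$- and $T$-powers from the two conversions gives the claimed $N^{3/2}T$ prefactor.

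\textbf{Main obstacle.} The delicate point is Leg~2: one must ensure $E$ neither reorders nor mixes the eigenvectors of $L_{\mathcal{J_D}}$ relative to those of $L_T\oplus L_{G_0}$, so that the row of $\ftmat_{AD}$ corresponding to $(k,l)$ is unambiguous --- this is precisely where assumption~2 ($(\Delta\lambda_G)_{min},(\Delta\lambda_J)_{min}>0$) is used, and it is also what makes the bound diverge when eigenvalues collide, so the argument must keep the ratio $\norm{E}/(\Delta\lambda_J)_{min}$ controlled below~$1$. The remaining work is bookkeeping: tracing how the time‑normalization converts $\int_0^T$ into powers of $\omega_{max}=2\pi$ and how the per‑row $\ell_2$ estimates on $\R^{NT}$ aggregate, via Frobenius $\le$ operator‑norm and the $\sqrt{NT}$ row count, into the global $N^{3/2}T$ factor; none of it is deep, but it is where the exact exponents are pinned down.
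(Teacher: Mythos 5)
Your proposal matches the paper's proof in all essentials: the intermediate matrix $\ftmat_D^{(0)}$ you introduce is exactly the paper's $\ftmat_J$ (the eigenbasis of the static joint Laplacian $L_T \oplus L_{G_0}$), the triangle inequality splits the error into the same two legs, and leg 1 is bounded by the identical first-order eigenvector-perturbation formula $\dot v_k = \sum_{j\neq k}\frac{w_j^* \dot A v_k}{\lambda_k-\lambda_j}v_j$ integrated in time and aggregated over the $NT$ rows to produce the $N^{3/2}T\,\omega_{max}^{1/2}/(\Delta\lambda_G)_{min}$ term. The only deviation is leg 2, where you apply Davis--Kahan (or the residual-over-gap estimate from Lemma \ref{lemma_ctdft_der}) to the accumulated perturbation $E$ of the joint Laplacian rather than integrating $\norm{\dot{\ftmat}_J}$ as the paper does; both reduce to a perturbation-over-$(\Delta\lambda_J)_{min}$ bound, so this is a cosmetic rather than substantive difference, and your explicit attention to the eigenvector-matching issue is if anything more careful than the paper's treatment.
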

\begin{proof}
    Consider $L_J$ to be the Laplacian of collective graph when the graphs are static with time. We can show this, in a similar manner as \ljdmat to be $L_J = L_T \otimes I_N + I_T \otimes L_G$, where $L_G$ is the Laplacian of the static graph. Let $\ftmat_J$ be the matrix whose rows form the left eigenvectors of $L_J$. Now we consider the graph to change infinitesimally with $L_G$ as the starting state of the Laplacian. We intend to bound the frobenius norm $\norm{\ftmat_D - \ftmat_{AD}}$. We can manipulate this as follows
    \begin{align*}
        \norm{\ftmat_D - \ftmat_{AD}} &= \norm{\ftmat_D - \ftmat_{AD} + \ftmat_J - \ftmat_J} \\
        &= \norm{\ftmat_D - \ftmat_J + \ftmat_J - \ftmat_{AD} } \\
        &\leq \norm{\ftmat_D - \ftmat_J}  + \norm{\ftmat_J - \ftmat_{AD} } \\
    \end{align*}
    We thus find the bound in two parts first for the error between $\ftmat_D, \ftmat_J$  and second for the error between $\ftmat_J, \ftmat_{AD}$. 

    In order to bound the matrices (which are formed by the eigenvectors) we first attempt to bound the vectors forming the matrix. For this we study the rate of change of the vectors with time (as the graph evolves) using the language of calculus. For deeper insights into this and the stability of eigenvectors/values we refer the interested reader to \citep{terry_pseudospectrum}. Consider a normal matrix $A$ of which the eigenvectors $v_1,\ldots,v_n$ form a basis of ${\Bbb C}^n$. Also we consider $w_1,\ldots,w_n$ be the dual basis, i.e. $w_j^* v_k = \delta_{jk}$ for all $1 \leq j,k \leq n$, where $\delta_{jk}$ is the Kronecker delta and 
    \begin{equation*}
        \delta_{jk} = 
        \begin{dcases}
        1 ,& \text{if } \text{$j=k$}\\
        0,              & \text{otherwise}
        \end{dcases} \\
    \end{equation*}
    Since the eigenvectors form a basis we can represent any vector $u$ as a linear combination of $v_1,\ldots,v_n$ as $u = \sum_{j=1}^n a_j v_j$. Also we have $w_j^* u = \sum_{j=1}^n a_j w_j^* v_j = a_j$.
    We thus have the below equation
    \begin{equation}\label{thm1_eq1}
        u = \sum_{j=1}^n (w_j^* u) v_j    
    \end{equation}
    for any vector $u \in \Bbb{C}^n$.  
    We know the below relation due to $v_k$ being the eigenvector of $A$ with eigen value $\lambda_k$
    \begin{equation}\label{thm1_eq2}
        A v_k = \lambda_k v_k
    \end{equation}
    We also can write the following in terms of the dual basis (since $A$ is a normal matrix)
    \begin{equation}\label{thm1_eq3}
    \begin{aligned}
        w_k^* A &= \sum_{j} \lambda_j w_k^* v_j w_j^* \\
        w_k^* A &= \lambda_k w_k^* 
    \end{aligned}
    \end{equation}
    
    We now differentiate \ref{thm1_eq2} using the product rule of differentiation to get
    \begin{equation}\label{thm1_eq4}
        \dot A v_k + A \dot v_k = \dot \lambda_k v_k + \lambda_k \dot v_k
    \end{equation}
    
    Taking the inner product of the equation \ref{thm1_eq4} with $w_k^*$, and using \ref{thm1_eq3} we obtain:
    \begin{equation}
    \begin{aligned}
        \dot A v_k + A \dot v_k &= \dot \lambda_k v_k + \lambda_k \dot v_k \\
        w_k^* \dot A v_k + w_k^* A \dot v_k &= w_k^* \dot \lambda_k v_k + w_k^* \lambda_k \dot v_k \\
        w_k^* \dot A v_k + \lambda_k w_k^* \dot v_k &= \dot \lambda_k w_k^* v_k + \lambda_k w_k^* \dot v_k \\
        w_k^* \dot A v_k &= \dot \lambda_k \\
        \dot \lambda_k = w_k^* \dot A v_k
    \end{aligned}
    \end{equation}
    
    In our case since A is normal, we have the eigenbasis $v_k$ as an orthonormal set, and the dual basis $w_k$ is identical to $v_k$. 
    
    We are interested in how the eigenvectors change with time. Taking the inner product of equation \ref{thm1_eq4} with $w_j^*$ for $j \neq k$, we get
    \begin{equation}
    \begin{aligned}
        \dot A v_k + A \dot v_k &= \dot \lambda_k v_k + \lambda_k \dot v_k \\
        w_j^* \dot A v_k + w_j^* A \dot v_k &= w_j^* \dot \lambda_k v_k + w_j^* \lambda_k \dot v_k \\
        w_j^* \dot A v_k + \lambda_j w_j^* \dot v_k &= \dot \lambda_k w_j^* v_k + w_j^* \lambda_k \dot v_k \\
        w_j^* \dot A v_k + \lambda_j w_j^* \dot v_k &= w_j^* \lambda_k \dot v_k \\
        w_j^* \dot A v_k + \lambda_j w_j^* \dot v_k - w_j^* \lambda_k \dot v_k &= 0 \\
        w_j^* \dot A v_k + (\lambda_j  - \lambda_k) w_j^* \dot v_k &= 0 \\
        w_j^* \dot v_k &= \frac{w_j^* \dot A v_k}{(\lambda_k  - \lambda_j)}\\
    \end{aligned}
    \end{equation}
    Using the above in \ref{thm1_eq1} we obtain the following
    \begin{equation}
    \begin{aligned}
        \dot v_k &= \sum_{j=1}^n (w_j^* \dot v_k) v_j  \\
        \dot v_k &= \sum_{j \neq k} (w_j^* \dot v_k) v_j +  (w_k^* \dot v_k) v_k \\
        \dot v_k &= \sum_{j \neq k} \frac{w_j^* \dot A v_k}{\lambda_k - \lambda_j} v_j +  (w_k^* \dot v_k) v_k
    \end{aligned}
    \end{equation}
    We consider the change in $A$ so that the resulting matrix is also normal. Thus the eigenvectors of the resulting matrix will also be orthonormal. This imples all the vectors lie on the surface of the unit sphere in $\Bbb{C}^n$ and so the change in the eigenvectors should be along the surface of this sphere. As such $\dot A v_k$ will be tangential to the sphere at $v_k$ and so $\dot v_k^{\top} v_k = 0$. Note this need not be the case in general if we consider non-unit vectors (that could also be eigenvectors). Thus we can represent $\dot v_k = 0 v_k + \sum_{j \neq k} b_j v_j$. Thus we have
    \begin{align*}
        \dot v_k &= v_k + \sum_{j \neq k} b_j v_j \\
        w_k^* \dot v_k &= w_k^* \left( \sum_{j \neq k} b_j v_j \right) \\
        w_k^* \dot v_k &= \left( \sum_{j \neq k} b_j w_k^* v_j \right) \\
        w_k^* \dot v_k &= 0
    \end{align*}
    Using the above equations and the consideration that $\norm{v_j}=1$ we have
    \begin{align*}
        \dot v_k &= \sum_{j \neq k} \frac{w_j^* \dot A v_k}{\lambda_k - \lambda_j} v_j \\
        \norm{\dot v_k} &= \norm{\sum_{j \neq k} \frac{w_j^* \dot A v_k}{\lambda_k - \lambda_j} v_j} \\
        &\leq \sum_{j \neq k} \norm{\frac{w_j^* \dot A v_k}{\lambda_k - \lambda_j} v_j} \\
        &\leq \sum_{j \neq k} \norm{\frac{w_j^* \dot A v_k}{\lambda_k - \lambda_j}} \norm{v_j} \\ 
    \end{align*}
    Since we consider orthonormal vectors $\norm{v_j}=1$
    \begin{align*} 
        \therefore \norm{\dot v_k} &\leq \sum_{j \neq k} \norm{\frac{w_j^* \dot A v_k}{\lambda_k - \lambda_j}} \\
        &\leq \sum_{j \neq k} \frac{ \norm{w_j^* \dot A v_k}}{\norm{\lambda_k - \lambda_j}} \\
        &\leq \sum_{j \neq k} \frac{ \sigma(\dot A)}{\norm{\lambda_k - \lambda_j}} \\
        &\leq \sum_{j \neq k} \frac{ \norm{\dot A}}{\norm{\lambda_k - \lambda_j}} \\
        &\leq \sum_{j \neq k} \frac{ \norm{\dot A}}{\dlambdamin} \\
        &\leq \frac{ N-1}{\dlambdamin} \norm{\dot A} \\
    \end{align*}
    where $\sigma(.)$ is the operator norm and $\dlambdamin$ is the absolute of the minimum difference between the eigenvalues of $A$. in the above we have seen how the change in eigenvectors is bounded by the change in the matrix. Using this result we now attempt to bound the change in the required transform matrices.

    For the first part we bound $\norm{\ftmat_D - \ftmat_J}$. Let $\Delta v$ represent the (infinitesimal) change in the eigenvectors of $L_G$ in time $t$ and let $\Delta v_i$ be the infinitesimal change per unit time in the vector at step $i$. 
    Thus using the triangle inequality we have the below equations
    \begin{align*}
        \norm{\Delta v} \leq \sum_{i} \norm{\Delta v_i} \\ 
        \norm{\Delta v} \leq \int_{t} \norm{\dot v(t)} dt \\
    \end{align*}
    Using the above derived inequality $ \norm{\dot v_k} \leq \frac{ N-1}{\dlambdamin} \norm{\dot A}$ for $L_G$ we have
    \begin{align*}
        \norm{\Delta v} \leq \int_{t} \norm{\frac{ N-1}{\dlambdamin} \norm{\dot L_G(t)}} dt \\
        \leq \int_{t} \frac{ N-1}{\dlambdamin} \left( \norm{ \dot L_G(t)  }\right)_{max} dt \\
    \end{align*}
    Finally taking the maximum norm of the rate of change in $L_G$ over the entire time duration we have the following
    \begin{align*}
        \norm{\Delta v} \leq \frac{ N-1}{\dlambdamin} \left( \norm{ \dot L_G  }\right)_{max}  \int_{t} dt \\
        \norm{\Delta v} \leq \frac{ N-1}{\dlambdamin} \left( \norm{ \dot L_G  }\right)_{max}  t \\
        \leq \frac{ \mathcal{O}(N-1)T}{\dlambdamin} \left( \norm{ \dot L_G  }\right)_{max} \\
    \end{align*}
    where $\left( \dot L_G \right)_{max}$ is the maximum of the norm of the rate of change of $L_G$ over all timesteps considered and we absorb the total time $t$ into the constant factor considering finite time. Thus we have,
    \begin{align*}
        \norm{\dot \ftmat_G} &= \sqrt{\sum_i \norm{v_i}^2} \\
        &\leq \frac{\sqrt{N}(N-1)T}{\dlambdagmin} \left( \dot L_G \right)_{max} \\
        &\leq \frac{\mathcal{O}(N^{\frac{3}{2}}T)}{\dlambdagmin} \left( \dot L_G \right)_{max} \\
    \end{align*}
    Thus using theorem 8 from \citep{tensor_product_norms} and the fact that $\norm{\ftmat_{T}^t} = 1$ we have,
    \begin{align*}
        \norm{\ftmat_D-\ftmat_J} &= \norm{\left( [\ftmat_{T} \boxtimes (\{\ftmat_{G_t}\} - \ftmat_G) ] \right)} \\
        &=  \left( \int_{\omega} \norm{\ftmat_{T}^{\omega} \otimes (\ftmat_{G_{\omega}} - \ftmat_G) }^2 \right)^{\frac{1}{2}} \\
        &\leq  \left( \int_{\omega} \norm{\ftmat_{T}^{\omega}}^2 \norm{ (\ftmat_{G_{\omega}} - \ftmat_G) }^2 \right)^{\frac{1}{2}} \\
        &\leq  \left( \int_{\omega} \norm{ (\ftmat_{G_{\omega}} - \ftmat_G) }^2 \right)^{\frac{1}{2}} \\
        &\leq  \left( \int_{\omega} \norm{ \Delta \ftmat_{G_{\omega}} }^2 \right)^{\frac{1}{2}} \\
        &\leq  \left( \int_{\omega} \left( \frac{\mathcal{O}(N^{\frac{3}{2}}T)}{\dlambdagmin} \norm{ \dot L_{G_{\omega}} }_{max} \right) ^2 \right)^{\frac{1}{2}} \\
        &\leq  \left( \omega_{max} \left( \frac{\mathcal{O}(N^{\frac{3}{2}}T)}{\dlambdagmin} \norm{ \dot L_{G} }_{max} \right) ^2 \right)^{\frac{1}{2}} \\
        \norm{\ftmat_D-\ftmat_J} &\leq  \left( \frac{\mathcal{O}(N^{\frac{3}{2}}T \omega_{max}^{\frac{1}{2}})}{\dlambdagmin} \norm{ \dot L_{G} }_{max} \right) \\
    \end{align*}
    where $T$ is the time duration over which the evolution ghof the graphs is considered. 

    For the second part we can show that
    \begin{align*} 
        \norm{ \ftmat_J - \ftmat_{AD}} &= \norm{\dot \ftmat_J} \\
         &= \sqrt{\int_{i=0}^{N\omega_{max}} \norm{v_i}^2} \\
        &\leq \frac{\sqrt{N\omega_{max}}(N\omega_{max}-1)T}{\dlambdajmin} \left( \dot L_J \right)_{max} \\
        &\leq \frac{\mathcal{O}((N\omega_{max})^{\frac{3}{2}}T)}{\dlambdajmin} \left( \dot L_J \right)_{max} \\
    \end{align*}
    Also we have,
    \begin{align*}
        L_J &= L_T \oplus L_G \\
        &= L_T \otimes I_N + I_T \otimes L_G \\
        \dot L_J &= I_T \otimes \dot L_G \\
        \norm{ \dot L_J} &= \norm{ I_T \otimes \dot L_G }\\
        \norm{ \dot L_J} &= \norm{ I_T } \norm{ \dot L_G }\\
        \norm{ \dot L_J} &= \int_{\omega} \left( d \omega \right)^{\frac{1}{2}} \norm{ \dot L_G }\\
        \norm{ \dot L_J} &= \sqrt{\omega} \norm{ \dot L_G }\\
    \end{align*}
    \begin{align*} 
        \therefore \norm{ \ftmat_J - \ftmat_{AD}} &\leq \frac{\mathcal{O}(N^{\frac{3}{2}})}{\dlambdajmin} \left( \dot L_J \right)_{max} \\
        &\leq \frac{\mathcal{O}(N^{\frac{3}{2}} \omega^2 T)}{\dlambdajmin} \left( \dot L_G \right)_{max} \\
    \end{align*}

    Combining the two parts we get the result
    \begin{equation}
    \begin{aligned}
        \norm{\ftmat_D - \ftmat_{AD}} &\leq \norm{\ftmat_D - \ftmat_J}  + \norm{\ftmat_J - \ftmat_{AD} } \\
        &\leq \mathcal{O}\left(\frac{N^{\frac{3}{2}} T \omega_{max}^{\frac{1}{2}}}{(\Delta \lambda_G)_{min}} + \frac{N^{\frac{3}{2}} T \omega_{max}^{2}}{(\Delta \lambda_J)_{min}}\right) \left( \norm{\dot{L_G}} \right)_{max}
    \end{aligned}
    \end{equation}

    For the discrete case this bound becomes 
    \begin{equation}
        \norm{\ftmat_D - \ftmat_{AD}} \leq \mathcal{O}\left(\frac{(NT)^{\frac{3}{2}}}{(\Delta \lambda_G)_{min}} + \frac{(NT^2)^{\frac{3}{2}}}{(\Delta \lambda_J)_{min}}\right) \left( \norm{\dot{L_G}} \right)_{max}    
    \end{equation}
    

\end{proof}

We thus see that as the graph evolves infinitesimally the difference between $\ftmat_D$ and $\ftmat_{AD}$ is bounded from above by the change in the graph matrix representation. This is desirable since it allows us to approximate $\ftmat_{AD}$ (formed by the eigendecomposition of \ljdmat) which has a physical interpretation using the defined $\ftmat_D$ which is simple to compute, when the graph changes in a stable manner.
In such cases, \emph{EFT} therefore characterizes signals on the dynamic graph by its proximity (projection) to the optimizers of $S_2(X)$ meaning high (collective dynamic graph) frequency components correspond to sharply varying signals and low frequency components to smoother signals.
Having derived the transform, we next state and prove the properties of the proposed transform in the next section.

\section{Properties of Proposed Transform}\label{sec:properties}
Having designed the Evolving Graph Fourier Transform, we now look at some of the properties of the transform. 
The below defines some properties of \emph{EFT} before learning its representations and applying to downstream tasks (proofs are in appendix section \ref{sec:proofs_apndx}).
\begin{mdframed}
\begin{property}\label{prop_equivalence_dft} (Equivalence in special case)
    Consider $\ftmat_T$ to be the time Fourier transform and $\ftmat_{G_t}$ to be the Graph Fourier transform at time $t$. Let $\ftmat_{\mathcal{J_D}}$ be the Graph fourier transform of $\mathcal{J_D}$. In the special case of $G_{t_i}=G_{t_{j}} \forall i,j \in \{T\}$ we have $(\ftmat_{\mathcal{J_D}})_i^j = (\ftmat_{D})_i^j = (\ftmat_T \otimes \{\ftmat_{G_t}\})_{i}^{j \left\lfloor \frac{j}{N} \right\rfloor}$.
\end{property}


\begin{property}\label{prop_invertible}
    EFT is an invertible transform and the inverse is given by $\mathbf{EFT}^{-1}(\hat{X})_{i}^{j} = \left( \ftmat_{G}^{-1} \hat{X} \right)_{i}^{k k} \left( \ftmat_T^{\top^{*}} \right)_{k}^{j}$ in matrix form and $\mathbf{EFT}^{-1}(\hat{x})_{j*N+i} = \left( \ftmat_T^{*} \otimes \ftmat_{G}^{-1} \right)_{j*N+i}^{k \left\lfloor \frac{k}{N} \right\rfloor} \hat{x}_k$ in vector form.
\end{property}

\begin{property}\label{prop_unitary}
    EFT is a unitary transform if and only if GFT is unitary at all timesteps considered i.e. $\ftmat_{D} \ftmat_{D}^{*} = I_{NT}$ iff  $\ftmat_{G_t} \ftmat_{G_t}^{*} = I_{N}, \forall t$.
\end{property}


\begin{property}\label{prop_invariant_order}
    EFT is invariant to the order of application of DFT or GFT on signal X.
\end{property}
\end{mdframed}

Property \ref{prop_unitary} allows us to define the stability of the proposed transform. Consider the EFT matrix $E$ and the signal vector $x$ (normalized). The transform would be given by $Ex$. Now consider the perturbed matrix $E+\epsilon$, where $\epsilon$ is the (fixed) perturbation. The relative difference between the output would be $\norm{(E+\epsilon)x-Ex}/\norm{Ex}=\norm{\epsilon x}/\norm{Ex}$. Since $E$ is orthogonal, $x$ is not in the null space of $E$ and so the relative difference is bounded by $\epsilon$. So a small change in $E$ should cause a small change in the output as desired.

As seen in property \ref{prop_equivalence_dft}, $\emph{EFT}$ can be simulated by $\emph{GFT}$ in the special case that the graph structure does not change with time.
The illustration between other transforms is in Figure \ref{fig:ctdft_motivation}. 
The figure shows transforms (\emph{GFT, JFT, DFT, EFT)} in a circle, and arrows from one transform to the next indicate that the source transform can be obtained by the destination transform using the simulation annotated on the edges. 
Please note that the analysis has been performed for one-dimensional signals. However, the same holds true for higher dimensions as well by conducting the \emph{EFT} dimension-wise. 
\textcolor{asparagus}{Here dimension-wise means the feature dimension of a node. Each node may have a multidimensional signal residing on it and the EFT can be independently applied to each channel or dimension of the node signals on the dynamic graph.}
Below subsection provides proofs for  the above stated properties.

\subsection{Proofs of Properties}
In this section we now prove the properties stated above. We repeat the statements for completeness.
Though \eft and \aeft are not same in the general case they are equivalent when the graph structure does not change with time. Below result proves the result for the discrete case with graphs sampled at uniform timesteps
\begin{property}\label{prop_equivalence_dft_apndx} (Special Equivalence between \aeft and \eft)
    Consider $\ftmat_T$ to be the time fourier transform and $\ftmat_{G_t}$ to be the Graph fourier transform at time $t$. Let $\ftmat_{\mathcal{J_D}}$ be the Graph fourier transform of $\mathcal{J_D}$. In the special case of $G_{t_i}=G_{t_{j}} \forall i,j \in \{T\}$ we have $(\ftmat_{\mathcal{J_D}})_i^j = (\ftmat_{D})_i^j = (\ftmat_T \otimes \{\ftmat_{G_t}\})_{i}^{j \left\lfloor \frac{j}{N} \right\rfloor}$.
\end{property}
\begin{proof}
As before consider $\{L_{G_t}\} \in R^{N \times N \times T}$ to be the Laplacian of the graphs at each timestep with eigenvalues $\lambda_i^{t}$ where $i \in {N}, t \in {T}$. Let $L_T \in R^{T \times T}$ be the Laplacian of the time adjacency matrix with eigenvalues $\mu_j$ where $j \in {T}$.
The Laplacian of the collective graph $\mathcal{J_D}$ is expressed as 
\begin{equation*}
    (L_{\mathcal{J_D}})_i^j = (L_T \oplus \{L_{G_t}\})_i^{j \left\lfloor \frac{j}{N} \right\rfloor} = L_T \otimes I_N + (I_T \otimes \{ L_{G_t} \})_{i}^{j \left\lfloor \frac{j}{N} \right\rfloor}
\end{equation*}
Consider $x_1, x_2, \dots x_p$ to be the linearly independent right eigenvectors of $L_T$ and $z_1^t, z_2^t, \dots z_{q_t}^t$ to be the linearly independent right eigenvectors of $L_{G_t}$.  Consider the vector $y_j = (x_k \otimes {z_l^t})_j^{\left\lfloor \frac{j}{N} \right\rfloor}, y \in R^{NT}$. Then we have
\begin{align*}
    & (L_{\mathcal{J_D}} y)_i = (L_T \otimes I_N)_i^j y_j + (I_T \otimes \{ L_{G_t} \})_{i}^{j \left\lfloor \frac{j}{N} \right\rfloor} y_j \\
    &= (L_T \otimes \{I_N\})_i^{j \left\lfloor \frac{j}{N} \right\rfloor} (x_k \otimes {z_l^t})_j^{\left\lfloor \frac{j}{N} \right\rfloor} + (I_T \otimes \{ L_{G_t} \})_{i}^{j \left\lfloor \frac{j}{N} \right\rfloor} (x_k \otimes {z_l^t})_j^{\left\lfloor \frac{j}{N} \right\rfloor} \\
    &= (L_T \otimes \{I_N\} \square x_k \otimes {z_l^t})_i^{\left\lfloor \frac{i}{N} \right\rfloor} + (I_T \otimes \{ L_{G_t} \square x_k \otimes {z_l^t}\})_{i}^{\left\lfloor \frac{i}{N} \right\rfloor} \\
    &= (L_T x_k \otimes \{I_N\} \square {z_l^t})_i^{\left\lfloor \frac{i}{N} \right\rfloor} + (I_T x_k \otimes \{ L_{G_t} \square {z_l^t}\})_{i}^{\left\lfloor \frac{i}{N} \right\rfloor} \\
    &= (\mu_k x_k \otimes {z_l^t})_i^{\left\lfloor \frac{i}{N} \right\rfloor} + (x_k \otimes \{ \lambda_{l}^t {z_l^t}\})_{i}^{\left\lfloor \frac{i}{N} \right\rfloor} \\
    &= (\mu_k x_k \otimes {z_l^t} + x_k \otimes \{ \lambda_{l}^t {z_l^t}\})_{i}^{\left\lfloor \frac{i}{N} \right\rfloor} \\
    &= ( x_k \otimes {z_l^t} diag(\{\mu_k\}) + x_k \otimes \{ {z_l^t}\} diag(\{\lambda_{l}^t\}))_{i}^{\left\lfloor \frac{i}{N} \right\rfloor} \\
    &= ( (x_k \otimes {z_l^t}) diag(\{\mu_k + \lambda_{l}^t\}))_{i}^{\left\lfloor \frac{i}{N} \right\rfloor} \\
\end{align*}
where $\square$ indicates timestep (column) wise product and $diag(.)$ operator converts a vector to a diagonal matrix. In the special case where $G_{ti}=G_{tj} \forall i,j \in {T}$ we have  $\lambda_{l}^{ti}=\lambda_{l}^{tj}$. Thus we get
\begin{align*}
    (L_{\mathcal{J_D}} y)_i &= ( (x_k \otimes {z_l^t}) diag(\{\mu_k + \lambda_{l} I_T\}))_{i}^{\left\lfloor \frac{i}{N} \right\rfloor} \\
    &= ( \mu_k + \lambda_{l} (x_k \otimes {z_l^t}) diag(\{ I_T\}))_{i}^{\left\lfloor \frac{i}{N} \right\rfloor} \\
    &= ( \mu_k + \lambda_{l} (x_k \otimes {z_l^t}))_{i}^{\left\lfloor \frac{i}{N} \right\rfloor} \\
    &= ( \mu_k + \lambda_{l}) y_{i} \\
\end{align*}
Thus $y_j = (x_k \otimes {z_l^t})_j^{\left\lfloor \frac{j}{N} \right\rfloor}$ is the eigenvector of $L_{\mathbf{J_D}}$ with eigenvalue $\mu_k + \lambda_{l}$. But $y$ is nothing but one of the columns of $\ftmat_D^{*}$. By the rank nullity theorem, the row spaces of the transform matrices $\ftmat_D$ and \emph{GFT} of $\mathcal{J_D}$ share the same orthogonal basis. Thus the two transforms are equivalent in this case.
\end{proof}

Note the eigenvalues ($\Lambda_T \oplus \Lambda_G$) obtained in the result above are exactly the ones used for plotting the frequency response of \eft as we compress the sequence of graphs into a single dynamic graph. 

Next we prove some properties of \eft as stated in the main paper

\begin{property}\label{prop_invertible_apndx}
    EFT is an invertible transform and the inverse is given by $\mathbf{EFT}^{-1}(\hat{X})_{i}^{j} = \left( \ftmat_{G}^{-1} \hat{X} \right)_{i}^{k k} \left( \ftmat_T^{\top^{*}} \right)_{k}^{j}$ in matrix form and $\mathbf{EFT}^{-1}(\hat{x})_{j*N+i} = \left( \ftmat_T^{*} \otimes \ftmat_{G}^{-1} \right)_{j*N+i}^{k \left\lfloor \frac{k}{N} \right\rfloor} \hat{x}_k$ in vector form.
\end{property}
\begin{proof}

We begin by noting the expression for \emph{EFT} ($\ftmat_{D}$)
\begin{equation*}
    \left(  \ftmat_{D} \right)_{j}^{i} = \left( \ftmat_T \otimes {\ftmat_{G_t}} \right)_{i}^{j \left\lfloor \frac{j}{N} \right\rfloor}
\end{equation*}
where $\ftmat_{G_t} \in R^{N \times N}$ is the graph fourier transform of the graph at time $t$, $\ftmat_T \in R^{T \times T}$ is the time fourier transform. Let $\invftmat_{G_{t}} = \ftmat_{G_t}^{-1}$  be the inverse graph fourier transform of the graph at timestep $t$ and $\invftmat_{T} = \ftmat_T^{*}$ be the inverse time fourier transform.

We can write $\ftmat_{D}$ as a block matrix in the following form
\begin{align*}
    \ftmat_{D} &= \left[ CB^1, CB^2, \dots CB^T \right] \\
    CB^i &= \ftmat_T^{i} \otimes \ftmat_{G_i}
\end{align*}
where $\ftmat_T^{i}$ is the $i$-th column of $\ftmat_T$ and $CB^i \in R^{NT \times N}$.

Consider $\invftmat_{D}$ in a similar but row block format as follows
\begin{gather}
    \invftmat_{D} 
    =
    \begin{bmatrix}
    RB_1 \\
    RB_2 \\
    \vdots \\
    RB_T
    \end{bmatrix} \\
    RB_i = \invftmat_{Ti} \otimes \invftmat_{G_i}
\end{gather}
where $\invftmat_{Ti}$ is the $i$-th row of $\invftmat_T$ and $RB_i \in R^{N \times NT}$.

Now taking the matrix product of $\invftmat_D$ and $\ftmat_D$ we get
\begin{gather*}
    \invftmat_D \ftmat_D
    =
    \begin{bmatrix}
    RB_1 \\
    RB_2 \\
    \vdots \\
    RB_T
    \end{bmatrix} 
    \begin{bmatrix}
    CB^1 & CB^2 & \dots CB^T
    \end{bmatrix} \\
    = 
    \begin{bmatrix}
    RB_1 CB^1 &  RB_1 CB^2 \dots \\
    RB_2 CB^1 & RB_2 CB^2 \dots \\
    \vdots \\
    RB_T CB^1 & RB_T CB^2 \dots
    \end{bmatrix}
\end{gather*}
We can verify that $RB_i CB^j$ evaluates to the following
\begin{align}
    RB_i CB^j &= \left( \invftmat_{Ti} \otimes \invftmat_{G_i} \right) \left( \ftmat_T^j \ftmat_{G_j} \right)\\
    &= \left( \invftmat_{Ti} \ftmat_T^{j} \right) \otimes \left( \invftmat_{G_i} \ftmat_{G_j} \right) \\
\end{align}
Now the columns of $\invftmat_{T}$ form the eigenvectors of a circulant matrix ($L_T$). Also we know that if columns form basis of column space then rows form the basis of the row space. Thus we have
\begin{align}
    \invftmat_{Ti} \ftmat_T^{j} &= 
    \begin{dcases}
    1 ,& \text{if } \text{$i=j$}\\
    0,              & \text{otherwise}
    \end{dcases} \\
    \invftmat_{G_i} \ftmat_{G_i} &= I_N \\
    \therefore RB_i CB^j &= 
    \begin{dcases}
    I_N ,& \text{if } \text{$i=j$}\\
    0,              & \text{otherwise}
    \end{dcases} \\
\end{align}
Thus we have shown that $\invftmat_D \ftmat_D = I_{NT}$. Thus $\invftmat_D$ is a left inverse of $\ftmat_D$. We know that for a square matrix left inverse is also the right inverse and can be readily verified in a similar manner. 
Thus \emph{EFT} is invertible and the inverse of the transformed signal in vector form is $\left( \invftmat_T \otimes \{\invftmat_{G_t}\} \right)_{i}^{j \lfloor \frac{j}{N} \rfloor} \hat{x_j} = \left( \ftmat_T^{*} \otimes \{\ftmat_{G_t}^{-1}\} \right)_{i}^{j \lfloor \frac{j}{N} \rfloor} \hat{x_j}$. Similarly for the matrix form of the signal we have the inverse of the transform given as $\left( \{\invftmat_{G_t}\} \hat{X} \right)_{i}^{j j} \left( \invftmat_T^{\top} \right)_j^k = \left( \{\ftmat_{G_t}^{-1}\} \hat{X} \right)_{i}^{j j} \left( \ftmat_T^{\top^{*}} \right)_j^k$.
\end{proof}

\begin{property}\label{prop_unitary_apndx}
    EFT is a unitary transform if and only if GFT is unitary at all timesteps considered i.e. $\ftmat_{D} \ftmat_{D}^{*} = I_{NT}$ iff  $\ftmat_{G_t} \ftmat_{G_t}^{*} = I_{N}, \forall t$
\end{property}
\begin{proof}
This property can be proved in a similar manner as in proof of property \ref{prop_invertible_apndx}.  
The only difference here is we consider $\invftmat_D$ to be the transposed conjugate of $\ftmat_D$ rather than inverse i.e. $\invftmat_D = \ftmat_D^{*}$ and also $\invftmat_{G_i} = \ftmat_{G_i}^{*}$.
Similar to the previous proof we have the following
\begin{gather*}
    \invftmat_D \ftmat_D
    =
    \begin{bmatrix}
    RB_1 \\
    RB_2 \\
    \vdots \\
    RB_T
    \end{bmatrix} 
    \begin{bmatrix}
    CB^1 & CB^2 & \dots CB^T
    \end{bmatrix} \\
    = 
    \begin{bmatrix}
    RB_1 CB^1 &  RB_1 CB^2 \dots \\
    RB_2 CB^1 & RB_2 CB^2 \dots \\
    \vdots \\
    RB_T CB^1 & RB_T CB^2 \dots
    \end{bmatrix}
\end{gather*}   
\begin{align*}
    RB_i CB^j &= \left( \invftmat_{Ti} \otimes \invftmat_{G_i} \right) \left( \ftmat_T^j \ftmat_{G_j} \right)\\
    &= \left( \invftmat_{Ti} \ftmat_T^{j} \right) \otimes \left( \invftmat_{G_i} \ftmat_{G_j} \right) \\
\end{align*}
\begin{gather*}
    \therefore \invftmat_D \ftmat_D
    = 
    \begin{bmatrix}
    \left( \invftmat_{T1} \ftmat_T^{1} \right) \otimes \left( \invftmat_{G_1} \ftmat_{G_1} \right) &  \left( \invftmat_{T1} \ftmat_T^{2} \right) \otimes \left( \invftmat_{G_1} \ftmat_{G_2} \right) \dots \\
    \left( \invftmat_{T2} \ftmat_T^{1} \right) \otimes \left( \invftmat_{G_2} \ftmat_{G_1} \right) & \left( \invftmat_{T2} \ftmat_T^{2} \right) \otimes \left( \invftmat_{G_2} \ftmat_{G_2} \right) \dots \\
    \vdots \\
    \left( \invftmat_{TT} \ftmat_T^{1} \right) \otimes \left( \invftmat_{G_T} \ftmat_{G_1} \right) &\left( \invftmat_{TT} \ftmat_T^{2} \right) \otimes \left( \invftmat_{G_T} \ftmat_{G_2} \right) \dots
    \end{bmatrix} \\
    = 
    \begin{bmatrix}
    1 \otimes \left( \invftmat_{G_1} \ftmat_{G_1} \right) &  0 \otimes \left( \invftmat_{G_1} \ftmat_{G_2} \right) \dots \\
    0 \otimes \left( \invftmat_{G_2} \ftmat_{G_1} \right) & 1 \otimes \left( \invftmat_{G_2} \ftmat_{G_2} \right) \dots \\
    \vdots \\
    0 \otimes \left( \invftmat_{G_T} \ftmat_{G_1} \right) & 0 \otimes \left( \invftmat_{G_T} \ftmat_{G_2} \right) \dots
    \end{bmatrix} \\
    = 
    \begin{bmatrix}
    \left( \ftmat_{G_1}^{*} \ftmat_{G_1} \right) &  0  \dots \\
    0 & \left( \ftmat_{G_2}^{*} \ftmat_{G_2} \right) \dots \\
    \vdots \\
    0 & 0 \dots
    \end{bmatrix} \\
\end{gather*}   

\noindent \underline{Part 1:}
If $\ftmat_{G_1}$ is unitary then $\ftmat_{G_1}^{*} = \ftmat_{G_1}^{-1}$.
Thus in this case $\invftmat_D \ftmat_D = I_{NT}$ which implies $\invftmat_{D} = \ftmat_D^{*} = \ftmat_D^{-1}$ implying $\ftmat_D$ is unitary.

\noindent \underline{Part 2:}
Considering $\ftmat_D$ is unitary whic means $\invftmat_{D} = \ftmat_D^{*} = \ftmat_D^{-1}$.
Thus $\invftmat_D \ftmat_D = I_{NT}$ and so $\ftmat_{G_i}^{*} \ftmat_{G_i} = I_N \xrightarrow{} \ftmat_{G_i}^{-1} = \ftmat_{G_i}^{*}$. $\therefore \ftmat_{G_i}$ is unitary proving the 2nd part and completing the proof.
\end{proof}

\begin{property}\label{prop_invariant_order1}
    EFT is invariant to the order of application of DFT or GFT on signal X.
\end{property}
The above property can be observed from equation \ref{jgft_einsum} using the fact that matrix multiplication is associative. 
\\

\newpage

\section{Datasets}

\begin{wraptable}{l}{0.42\textwidth}
	\centering
	\caption{The statistics of the Large scale Dynamic graph datasets for link prediction.}
        \resizebox{0.4\textwidth}{!}
	{
         \begin{tabular}{ccccc}
			\toprule
			\bf SR Datasets & \bf Beauty & \bf Games & \bf CDs \\
			\midrule
                \bf $\#$ of Users	& 52,024	& 31,013	& 17,052 \\
                \bf $\#$ of Items	& 57,289	& 23,715	& 35,118 \\
                \bf $\#$ of Interactions	& 394,908	& 287,107	& 472,265 \\
                \bf Average length	& 7.6	& 9.3	& 27.6 \\
                \bf Density	& 0.01\%	& 0.04\%	& 0.08\% \\
			\bottomrule
	\end{tabular}
        }
	\label{tab:dataset_stats}
\end{wraptable} 

\textbf{Continuous Time Dynamic Graph link prediction dataset in sequential recommendation setting:} 
For showing the efficacy of our method on large dynamic graphs, we perform experiments on three real-world e-commerce datasets (cf., Table \ref{tab:dataset_stats}) for sequential recommendation.  Specifically, we pose the sequential recommendation as a link prediction problem on temporal graphs.
The penultimate and last interactions are used for validation and testing, respectively. The graphs at each interaction timestamp is constructed as detailed in \citep{DGSR}  i.e., at time $t$, the subgraph ($G_t$) containing all interactions till $t$ is considered. Then the $m$-hop neighborhood $G^m_t(u)$ around the user $u$ is sampled from it. The next item to predict is the item ($i_{t+1}$) interacted with at time $t+1$. Thus the training set would contain $(G^m_1(u), i_{2}), (G^m_2(u), i_{3}) \dots (G^m_{T-2}(u), i_{T-1})$ and the test set would have $(G^m_{T-1}(u), i_{T})$. The graph construction is done in the preprocessing phase to speed-up training and testing.

\begin{wraptable}{l}{0.42\textwidth}
\centering
\caption{Statistics and details for link prediction on the benchmark dynamic graph datasets. LP is the abbreviation for Link Prediction and NC is for Node Classification.}
\label{tab:dataset}
\resizebox{0.4\textwidth}{!}{
\begin{tabular}{cccccc}
\hline
& \# Nodes & \# Edges & \# Time Steps & Task\\
&          &          & (Train / Val / Test) & &   \\
\hline

SBM      & 1,000   & 4,870,863 & 35 / 5 / 10 & LP  \\
UCI      & 1,899   & 59,835  & 62 / 9 / 17 & LP\\
AS       & 6,474   & 13,895  & 70 / 10 / 20 & LP \\ 
Elliptic & 203,769 & 234,355 & 31 / 5 / 13 & NC \\
Brain   & 5,000  & 1,955,488 & 10 / 1 / 1 & NC \\
\hline
\end{tabular}
}
\end{wraptable}

\noindent \textbf{Benchmark Dynamic Graph Datasets}:
Table \ref{tab:dataset} summarizes datasets for link prediction on benchmark dynamic graph datasets.
Each dataset contains a sequence of time-ordered graphs. 
SBM is a synthetic dataset to simulate evolving community structures. UCI dataset is a student community network where nodes represent the students, and the edges represent the messages exchanged between them. AS dataset summarizes a temporal communication network indicating traffic flow between routers. 
The Elliptic (Ell) dataset delineates legitimate versus unlawful transactions within the elliptic network of Bitcoin transactions. 
In this context, nodes symbolize individual transactions, while edges correspond to the pathways of monetary transfers. 
The Brain (Brn) dataset focuses on nodes representing minuscule cerebral regions or cubes, with the edges signifying their interconnections.

\textbf{Synthetic Dataset}\label{sec:synthetic_dataset_apndx}
Consider the dynamic graph over T timesteps. Thus we have T graph snapshots. We compute the eigenvectors at each snapshot and place them over the graph's nodes. Moreover, for each node (spread over T timesteps) we compute a periodic signal that is added to the eigenvector component $Evec(G_t)$. So the expression for the noise added to the signal would be: $X(i,t) = \sum_k \alpha_k Evec(G_t)[i,k] + \sum_f \beta_f * e^{i\omega_t f}[t]$. In our experiments, we have used only one randomly chosen eigenvector. Also we consider only a single sinusoid frequency $\omega$. $\alpha_k, \beta_f$ are parameters and are set to $\frac{1}{2}$ in our experiments. For noise, we add to $X_(i,t)$ a signal taken from a Gaussian distribution with 0 mean i.e. $X(i,t) = X(i,t) + \mathcal{N}(0,\delta)$, where $\delta$ is the standard deviation.

\subsection{Experimental Setup}
 We implement our models using the DGL framework \citep{wang2019dgl} in the pytorch library \citep{paszke2017pytorch}. 
The hyperparameters are selected from the following search space: learning rate $\in [0.01,0.0003]$, $l_2$ regularization parameter $\alpha \in [0.01,0.00001]$, embedding and hidden layer dimensions $\in \{32,64,128\}$, filter order $\in \{2,4,8,16\}$, subgraph size $\in \{1,2,3,4\}$. 
The experiments are run on a single Tesla P100 GPU. We run our method for 5 runs per dataset and report the mean of the results. For the baselines we report the best results that have been reported unless mentioned otherwise. If results are not available we run baselines by using the implementation provided with default parameters and optimizing the hidden size (width) and layer number (depth) of the network.
Regarding graph construction, for the Sequential Recommendation (SR) datasets we use similar to \citep{DGSR}. For the Session Based Recommendation (SBR) setting we use the transition graph of the items in the sequence as in \citep{SRGNN}. We also try with higher order graphs, albeit without any gains, as reported in \citep{SRGNN}. 
Moreover, since for SBR the last item is of more significance to the prediction task and the datasets suffer from overfitting we modify the prediction layer accordingly and incorporate appropriate changes from baselines.

\subsection{Implementation} \label{sec:architecture_apndx}

We intend to perform filtering in spectral space for dynamic graphs using \emph{EFT}. Since our idea is to perform collective filtering along the vertex and temporal domain in \emph{EFT}, we need two modules to compute $\ftmat_{G_t}$ (vertex aspect) and $\ftmat_T$ (temporal aspect), respectively, in equation \ref{jgft_einsum} of \emph{EFT}. We now explain these modules in detail. 

\noindent \textbf{Filtering along the vertex domain:}
This module computes the convolution matrix $\ftmat_{G_t}$ in equation \ref{jgft_einsum}.
Consider the filter response $\hat{\Lambda}_l$ which is a diagonal matrix with diagonal values representing the magnitude of the corresponding frequency(eigenvalue). 
In order to avoid the computational cost of the eigendecomposition, we choose to approximate the it using polynomials. In this work, we use the Chebyshev polynomials \citep{defferrard2016convolutional}. 
Specifically, the frequency response of the desired filter is approximated as $\hat{\Lambda}_l = \sum_{k=0}^{\textcolor{black}{O_f}} c_k T_k(\Tilde{\Lambda})$,
where $O_f$ is the polynomial/filter order, $T_k$ is the Chebyshev polynomial basis, $\Tilde{\Lambda} = \frac{2\Lambda}{\lambda_{max}} - I$, $\lambda_{max}$ is the maximum eigenvalue and $c_k$ is the corresponding \textit{filter coefficients}. 
Thus, we can approximate the filtering operation as:
$X \ast \Lambda_l \approx U \left( \sum_{k=0}^{\textcolor{black}{O_f}} c_k T_k(\Tilde{\Lambda_l}) \right) U^*X = \sum_{k=0}^{\textcolor{black}{O_f}} c_k T_k(U \Tilde{\Lambda_l} U^*) X = \sum_{k=0}^{\textcolor{black}{O_f}} c_k T_k(\Tilde{L}) X$.
Having the filter coefficients $c_k$ as learnable parameters enables learning of filter for the task. 
 The convolution $X \ast \Lambda_l$ gives the desired filtered response. 


\noindent \textbf{Filtering along the temporal Domain:}
After performing filtering in the vertex domain, we aim to filter over the temporal signals using  $\ftmat_T$ as in equation \ref{jgft_einsum}.
To apply the $\ftmat_T$ (Fourier transform), we must first ensure that the signals in sequences are sampled at uniform intervals. 
In the continuous time setting, interactions between nodes could occur at anytime or the sampling could be non-uniform, 
Thus, we perform a mapping from $R^{T \times d} \xrightarrow{} R^{T \times d}$ that aims to map the input space to a uniformly sampled space. For computational reasons, we select the current and next embeddings (with positional information) along with the timestamp information ($E_t(t) \in R^{d}$) for getting the mapped embedding akin to interpolation. 
Formally, let $X_{t}^i \in R^{d}$ be the embeddings of the node at time $t$. This is first mapped to the interpolated space using a universal approximator: $X_{t} = W^i_2 \sigma^i (W^i_1 [X_{t}^i;X_{t+1}^i;E_t(t)] + b^i_1) + b^i_2$, 
where $W^i_1$, $W^i_2$, $b^i_1$, $b^i_1$ are learnable parameters and $\sigma^i$ is a non-linearity. We call this module the \emph{time encoding layer} 
, which is essential for applying Fourier transform along the temporal dimension. 
Let $X = X_{t} \in R^{T \times d}$ be the interpolated sequence of embeddings of the node. This is converted to the frequency domain ($\hat{X} \in R^{T \times d}$) using the DFT matrix $\ftmat_T$ as $\hat{X} = \ftmat_T X_t$
Then we multiply $\hat{X}$ element-wise by a temporal filter $F_T \in R^{T \times d}$ to obtain the filtered signal $\hat{X}_f = F_T \odot \hat{X}$ which is then converted back to the temporal domain by using the inverse transform $\ftmat_T^*$ to get $X_f = \ftmat_T^* \hat{X}_f$.
$X_f$ is the equivalent of $\hat{X}_G$ in equation \ref{jgft_einsum} that is the output of \emph{EFT}. 
In practice, the fast Fourier transform is used that can perform the computations in order $\mathcal{O}(T log(T))$. 
Hence, overall time complexity of the architecture is $O((N+E)T + NTlogT)$.
To map the output back to the original space from the interpolated space we would need further mapping layers. Similar to \citep{FMLPRec}, we use the standard layer normalization (LN) and feedforward (FFN) layers: $X_{F} = \text{LN}\left(\text{LN}\left(X_t + \text{D}(X_f)\right) + \text{D}\left(\text{FFN}\left(\text{LN}\left(X_t + \text{D}(X_f)\right)\right)\right)\right)$, where 
$W^f_2, W^f_1, b^f_1, b^f_2$ are learnable parameters and D(.) represents dropout. We could stack filter layers with the node embeddings obtained from previous layers as inputs. $X_{F}$ is the final filtered signal that is used in the downstream prediction. For the concerned node $n$ we denote this as $X_{F}^n$. 


\subsection{Computational Complexity}
Considering the spectral transform, the exact eigendecomposition of the joint laplacian would take order $\mathcal{O}((NT)^3)$ whereas our method of \eft would take $\mathcal{O}(N^3T + NT \log(T))$. Thus we reduce the complexity from a factor of $T^3$ to $T\log(T)$. This would be beneficial in cases where there are many timesteps considered. For the model at the implementation level 
since we have made use of a function approximator that runs in time linear to the number of edges ($\varepsilon$), the time complexity is $\mathcal{O}(\varepsilon + NT\log(T))$.
We have performed a wall clock run time analysis for the training of our method and the results in table \ref{tab:wall_clock_eft} shows that it is comparable to a dynamic graph based baseline (that doesn't use any spectral transform):

\begin{table*}[!ht] 
\centering
\caption{Wall clock running (sec/epoch) time of our and baseline method on SR datasets}\label{tab:wall_clock_eft}
\begin{tabular}{ccc}
\hline
Dataset	& Method	&	Wall clock time (sec/epoch) \\
\hline
Beauty	&	DGSR	&	565 \\
Beauty	&	EFT	&	753 \\
Games	&	DGSR	&	1719 \\
Games	&	EFT	&	2535 \\
CD	&	DGSR	&	5415 \\
CD	&	EFT	&	12637 \\
\hline
\end{tabular}
\end{table*}



\newpage

\section{Ablation Study}

\begin{wraptable}{l}{0.60\textwidth}
	\caption{Ablation study of our model. We report Recall@10 (R@10) and NDCG@10 on Beauty and Games datasets.}
	\label{tab:filter}
 \resizebox{0.58\textwidth}{!}{
	\setlength{\tabcolsep}{1.3mm}{
		\begin{tabular}{l|cc|cc}
			\hline
			 &  \multicolumn{2}{c|}{Beauty} &
			\multicolumn{2}{c}{Games} \\
			\hline
			& R@10 & NDCG@10 & R@10 &NDCG@10 \\
			\hline
			\hline
			EFT & \textbf{53.23} & \textbf{37.10} & \textbf{77.78} & \textbf{58.75}\\
			\hline
			w/o Temporal filter & 52.42 & 36.12 & 76.55 & 56.95\\
			w/o Graph filter & 38.27 & 24.39 & 58.36 & 40.06\\
			\hline
			+High Pass Filter & 47.53 & 31.10 & 76.88 & 57.24\\
			+Low Pass Filter & 52.71 & 36.76 & 77.74 & 58.49\\
                +Band Pass Filter & 52.27 & 36.09 & 76.67 & 56.98\\			
                +Band Stop Filter & 45.34 & 29.09 & 77.63 & 58.42\\
			\hline
		\end{tabular}
  \label{tab:ablation_results}
	}
 }
\end{wraptable}

\textbf{Component ablation:} In our first ablation study, we study the effect of various modules of the 
\eft architecture by systematically removing model components. Table \ref{tab:ablation_results} summarizes our findings. For example, performance declines when we remove the graph filtering module ("w/o graph filter" in Table \ref{tab:ablation_results}). 
It confirms that the graph filters help to reduce long-range noise and positively impact performance.
Next, we replace learnable filters of \eft with several static filters such as highpass, bandpass, lowpass, etc. Performance with static filters is less than that of dynamic filters, supporting our choice of having learnable filters in \eft. 

\noindent \textbf{Parameter Selection:} In this experiment, we study the effect of filter and graph construction parameters that will help select optimal parameters for the model. Specifically, we run experiments 
for 1) the order of the graph filter and 2) subgraph size, which is the number of hops considered around the given user node for constructing the graph. 
The results are in Figure 7 with apt transform of the 2 metric scales for comprehension. 
For the filter order, we observe that for both datasets, there exists an optimal filter order at which the best performance is achieved. We observe that increasing filter order further causes overfitting on these datasets. For the subgraph size, we observe an increasing trend in the results, indicating that higher subgraph sizes ($> 1$) benefit the performance over a single hop (which is the sequence itself). 
This shows that modeling the SR as a graph learning problem is helpful over considering only the sequence. 
We conclude that beyond the subgraph size of two, the results saturate for these datasets.

\begin{figure}
     \centering
     \begin{subfigure}[b]{0.21\textwidth}
         \centering
         \includegraphics[width=\textwidth]{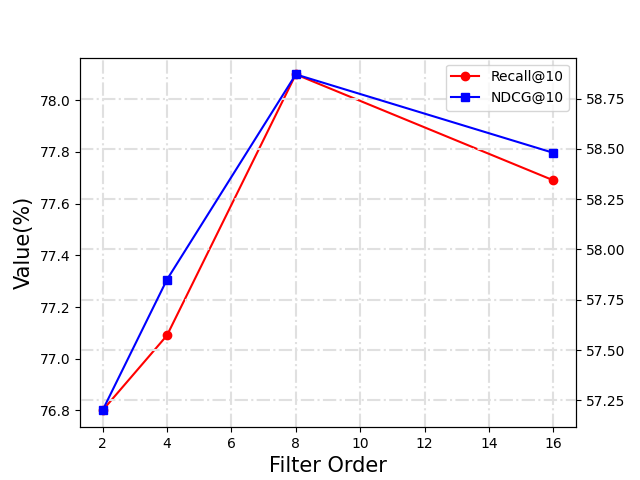}
         \caption{$Games$}
         \label{fig:fo_games}
     \end{subfigure}
     \hfill
     \begin{subfigure}[b]{0.21\textwidth}
         \centering
         \includegraphics[width=\textwidth]{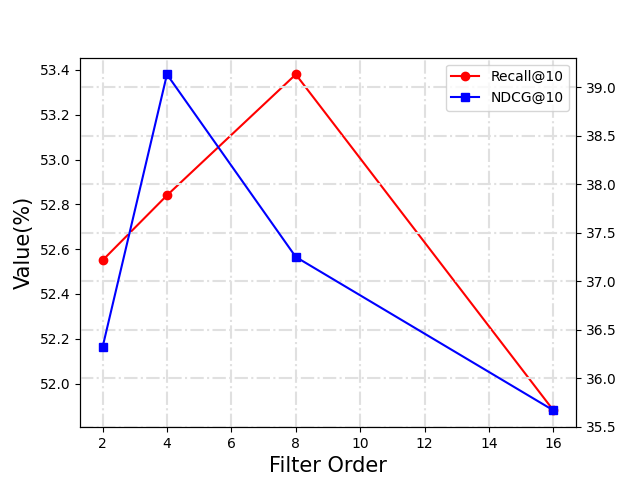}
         \caption{$Beauty$}
         \label{fig:fo_beauty}
     \end{subfigure}
     \hfill
     \begin{subfigure}[b]{0.21\textwidth}
         \centering
         \includegraphics[width=\textwidth]{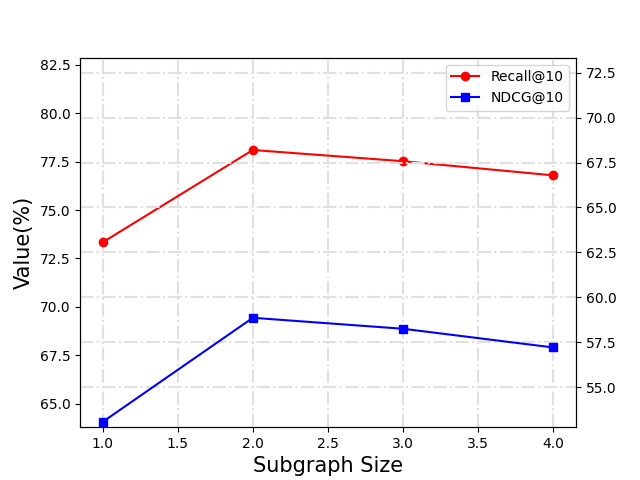}
         \caption{$Games$}
         \label{fig:khop_games}
     \end{subfigure}
     \hfill
     \begin{subfigure}[b]{0.21\textwidth}
         \centering
         \includegraphics[width=\textwidth]{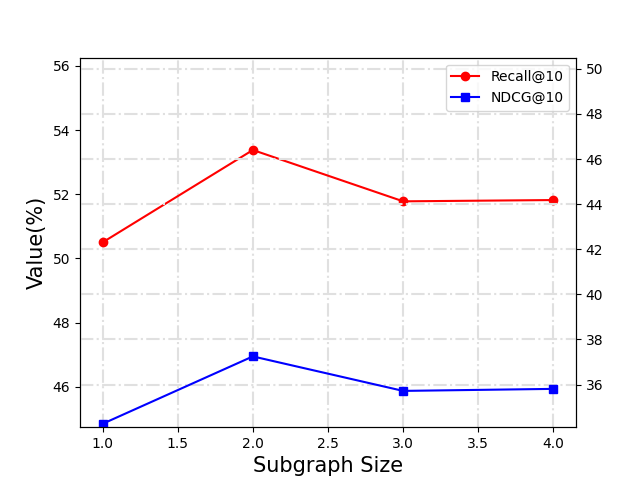}
         \caption{$Beauty$}
         \label{fig:khop_beauty}
     \end{subfigure}
        \caption{Effect of the parameters (filter order and subgraph size) on \eft performance.}
        \label{fig:param_selection}
\end{figure}

\noindent 

\newpage

\end{appendix}

\end{document}